\renewcommand{\thefootnote}{\fnsymbol{footnote}}
\title{Reward Tweaking: Maximizing the Total Reward While Planning for Short Horizons}
\author{%
  Chen Tessler \\
  Department of Electrical Engineering\\
  Technion Institute of Technology\\
  Haifa, Israel \\
  \texttt{chen.tessler@campus.technion.ac.il} \\
   \And
   Shie Mannor \\
   Department of Electrical Engineering\\
   Technion Institute of Technology\\
   Haifa, Israel \\
   \texttt{shie@ee.technion.ac.il}
}
\newcommand\numberthis{\addtocounter{equation}{1}\tag{\theequation}}
\newtheorem{prop}{Proposition}
\newtheorem{lemma}{Lemma}
\newtheorem{theorem}{Theorem}
\newtheorem{remark}{Remark}
\definecolor{darkgray}{rgb}{0.66, 0.66, 0.66}
\newcommand\SLASH{\char`\\}
\DeclareMathOperator*{\argmax}{arg\,max}
\begin{document}

\maketitle

\renewcommand{\thefootnote}{\arabic{footnote}}

\begin{abstract}
    In reinforcement learning, the discount factor $\gamma$ controls the agent's effective planning horizon. Traditionally, this parameter was considered part of the MDP; however, as deep reinforcement learning algorithms tend to become unstable when the effective planning horizon is long, recent works refer to $\gamma$ as a hyper-parameter -- thus changing the underlying MDP and potentially leading the agent towards sub-optimal behavior on the original task. In this work, we introduce \emph{reward tweaking}. Reward tweaking learns a surrogate reward function $\tilde r$ for the discounted setting that induces optimal behavior on the original finite-horizon total reward task. Theoretically, we show that there exists a surrogate reward that leads to optimality in the original task and discuss the robustness of our approach. Additionally, we perform experiments in high-dimensional continuous control tasks and show that reward tweaking guides the agent towards better long-horizon returns although it plans for short horizons.
\end{abstract}

\section{Introduction}

\begin{wrapfigure}{R}{0.5\textwidth}\vspace{-0.4cm}
    \centering
    \includegraphics[width=0.49\textwidth]{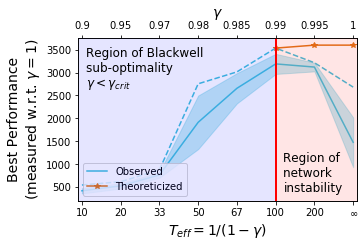}
    \caption{\label{fig: hopper blackwell} Performance on the Hopper-v2 domain as a function of $\gamma$. The dotted line represents the best performing policy across all seeds. Although the agent is trained with $\gamma \in [0.9, 1]$, it is evaluated on $\gamma = 1$. The behavior is split into two regions -- for $\gamma > 0.99$ performance drops as the learning process becomes unstable, whereas for $\gamma \leq 0.99$ performance drops as the agent plans for a short horizon (the discount is small and doesn't satisfy Blackwell optimality).}
\end{wrapfigure}

Traditionally, the goal of a reinforcement learning (RL) agent is to maximize the reward it can obtain. While there exist a plethora of settings (discounted, average, constrained, etc...), in this work we focus on the \emph{finite horizon} setting with known horizon $T$ and a total reward objective $\max \mathbb{E} \left[\sum_{t=0}^T r_t\right]$. Although the agent is evaluated on the total reward, motivated by recent practical trends, we consider an agent which is trained within the $\gamma$-discounted setting.


The discount factor has an important property when training agents using function approximation schemes (neural networks) -- the discount factor acts as a low-pass filter (regularizer \citep{amit2020discount}), effectively reducing the noise in the learning process. Hence, our work focuses on this gap between training and evaluation metrics. Although in most settings, agents are evaluated on the undiscounted total reward, in practice, due to algorithmic limitations, state-of-the-art methods train the agent on a different objective -- the discounted sum \citep{mnih2015human,tessler2017deep,schulman2017proximal,haarnoja2018soft}.

While \emph{Blackwell optimality} \citep{blackwell1962discrete} guarantees that there exists some $\gamma_\text{crit}$ such that solving for $\gamma \geq \gamma_\text{crit}$ results in a policy optimal for the total reward; in practice, finding $\gamma_\text{crit}$ and solving for it, isn't a simple task. As we show in \cref{fig: hopper blackwell}, in practice, the performance of the agent can be characterized by two regions (the location of the border between both regions depends both on the environment and the algorithm): 
\begin{enumerate}
    \item ($\gamma < 0.99$) The well known region of Blackwell sub-optimality $\gamma \ll \gamma_\text{crit}$. Here the planning horizon is insufficient. Long term rewards are discounted and thus have minuscule effect on the policy, resulting in myopic and sub-optimal behavior.
    
    \item ($\gamma > 0.99$) The region of network instability. While, in theory, one would expect behavior as presented in the `theoreticized' line (performance improves with the increase of $\gamma$), as conjectured above, the noise apparent in the learning process is too large and thus as $\gamma$ continues to increase, the performance continues to degrade.
\end{enumerate}



Previous work has highlighted the sub-optimality of the training regime and those attempting to overcome these issues are commonly referred to as Meta-RL algorithms. The term ``Meta" refers to the algorithm adaptively changing the objective to improve performance. However, these methods commonly operate in an end-to-end scheme and are thus confined to optimizing the $\gamma$-discounted objective. For instance, Meta-gradients \citep{xu2018meta} optimize the policy on the inner-loss with respect to (w.r.t.) a learned $\gamma_\text{adaptive}$ whereas $\gamma_\text{adaptive}$ is optimized w.r.t. the outer-loss (the original $\gamma=0.99$ objective). An additional work is LIRPG \citep{zheng2018learning}, which learns an intrinsic reward function that, when combined with the extrinsic reward, guides the policy towards better performance in the $0.99$-discounted task.

In this work, we take a different path. Our method, reward tweaking, learns a surrogate reward function $\tilde r$. The surrogate reward is learned such that \textbf{an agent maximizing $\mathbf{\tilde r}$} in the $\mathbf{\gamma}$-discounted setting, will be \textbf{optimal w.r.t. the cumulative undiscounted reward $\mathbf{r}$}. \cref{fig: reward tweaking} illustrates our proposed framework. Thus reward tweaking enables the agent to find optimal policies even while optimizing over short horizons.

We formalize the task of learning $\tilde r$ as a classification problem and analyze it both theoretically and empirically. Specifically, we construct it in the form of finding the max-margin separator between trajectories, a supervised learning problem. We evaluate reward tweaking on both a tabular domain and a high dimensional continuous control task (Hopper-v2 \citep{todorov2012mujoco}). While our method does not mitigate the issues in the region of network instability, it improves the performance of the agent in the region of Blackwell sub-optimality, for all discount values (including at $\gamma = 0.99$ which is observed to perform best within the stable region).


\begin{figure}[t]
    \centering
    \includegraphics[width=0.5\linewidth,trim={10cm 7.3cm 10cm 5.3cm},clip]{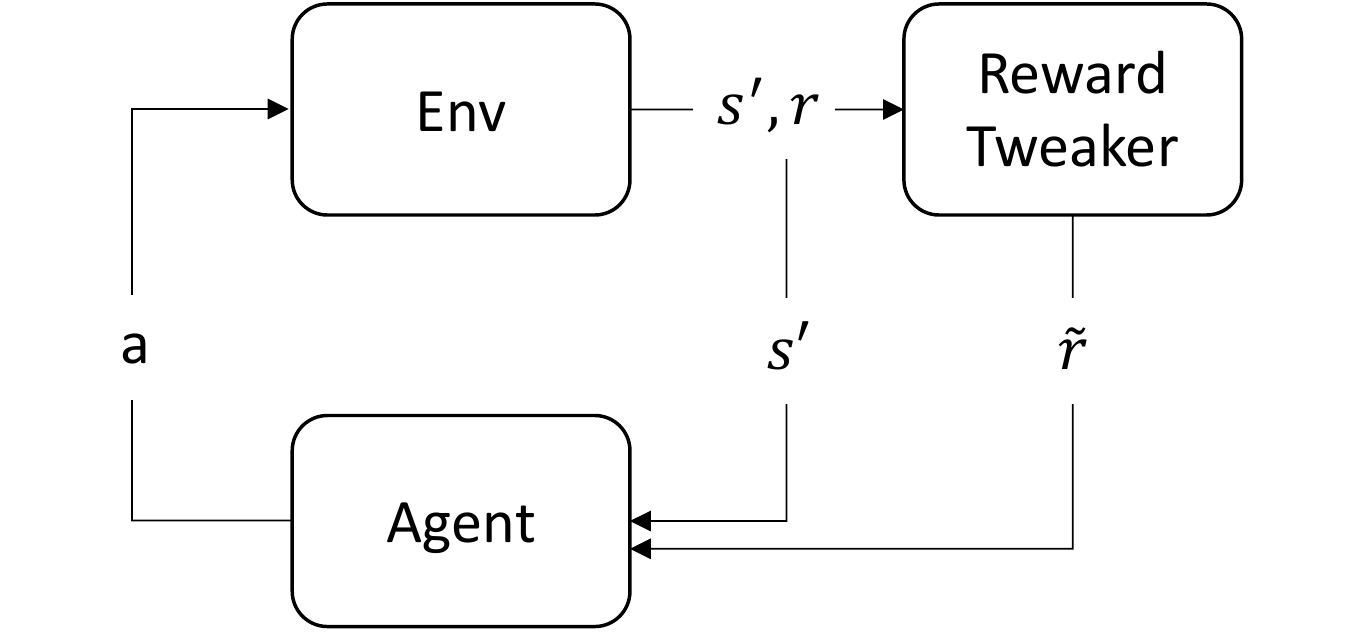}
    \caption{Reward tweaking diagram. Experience collected by the agent is used to train the surrogate reward $\tilde r$. The agent does not observe the reward from the environment, but rather the output of the reward tweaker. The agent attempts to maximize the $\gamma$-discounted reward w.r.t. $\tilde r$. The goal of the reward tweaker is to learn a new reward $\tilde r$ such that the agent will be optimal w.r.t. the original objective $\sum_t r_t$.}
    \label{fig: reward tweaking}
\end{figure}

\section{Related Work}

\textbf{Reward Shaping:} Reward shaping has been thoroughly researched from multiple directions. As the policy is a by-product of the reward, previous works defined classes of reward transformations that preserve the optimality of the policies. For instance, potential-based reward shaping that was proposed by \citet{ng1999policy} and extended by \citet{devlin2012dynamic} that provided a dynamic method for learning such transformations online. While such methods aim to preserve the policy, others aim to shift it towards better exploration \citep{pathak2017curiosity} or attempt to improve the learning process \citep{chentanez2005intrinsically,zheng2018learning}. However, while the general theme in reward shaping is that the underlying reward function is unknown, in \emph{reward tweaking} we have access to the real returns and attempt to tweak the reward such that it enables overcoming the pitfalls of learning with small discount factors.


\textbf{Direct Optimization:} While practical RL methods typically solve the \emph{wrong} task, i.e., the $\gamma$-discounted, there exist methods that can directly optimize the total reward. For instance, evolution strategies \citep{salimans2017evolution} and augmented random search \citep{mania2018simple} perform a finite difference estimation of $\nabla_\theta \mathbb{E}^{\pi_\theta} [\sum_{t=0}^T r_t]$. This optimization scheme does not suffer from stability issues and can thus optimize the real objective directly. As these methods can be seen as a form of finite differences, they require complete evaluation of the newly suggested policies at each iteration, thus, in the general case, these procedures are sample inefficient.

\textbf{Inverse RL and Trajectory Ranking:} 
In Inverse RL \citep{ng2000algorithms,abbeel2004apprenticeship,syed2008game}, the goal is to infer a reward function which explains the behavior of an expert demonstrator; or in other words, a reward function such that the optimal solution results in a similar performance to that of the expert. The goal in Trajectory Ranking \citep{wirth2017survey} is similar, however instead of expert demonstrations, we are provided with a ranking between trajectories. While previous work \cite{brown2019extrapolating} considered trajectory ranking to perform IRL; they focused on a set of trajectories, possibly strictly sub-optimal, which are provided apriori and a relative ranking between them. In this work, the trajectories are collected on-line using the behavior policy and the ranking is performed by evaluating the total reward provided by the environment.

\section{Preliminaries}
We consider a Markov Decision Process \citep[MDP]{puterman1994markov} defined by the Tuple $(\mathcal{S}, \mathcal{A}, \mathcal{R}, \mathcal{P})$, where $\mathcal{S}$ are the states, $\mathcal{A}$ the actions, $\mathcal{R} : \mathcal{S} \to \mathbb{R}$ the reward function and $\mathcal{P} : \mathcal{S} \times \mathcal{A} \to \mathcal{S}$ is the transition kernel.  The goal in RL is to learn a policy $\pi : \mathcal{S} \to \Delta_\mathcal{A}$. While one may consider the set of stochastic policies, it is well known that there exists an optimal deterministic policy. In addition to the policy, we define the value function $v^\pi : \mathcal{S} \to \mathbb{R}$ to be the expected reward-to-go of the policy $\pi$ and the quality function $Q^\pi : \mathcal{S} \times \mathcal{A} \to \mathbb{R}$ the utility of initially playing action $a$ and then acting based on policy $\pi$ afterwards.

In this work we focus on the episodic setting, where each episode is up to length $T$ (where $T$ is known and defined either by the user or the environment), and consider two objectives. (1) The discounted return
$v^\pi_\gamma = \mathbb{E}^{\pi}_{s \sim \mu} \left[\sum_{t=0}^T \gamma^t r_t | s_0 = s\right] \enspace ,$ where $\mu$ is the initial state distribution and the discount factor $\gamma \in [0,1]$ controls the effective planning horizon. (2) The total return $v^\pi_1 = \mathbb{E}^{\pi}_{s \sim \mu, T} \left[ \sum_{t=0}^T r_t | s_0 = s \right] \enspace .$

As we focus on the episodic setting, it is important to note that the optimal policy may not necessarily be stationary. This can easily be solved by appending the remaining horizon to the state \citep{dann2015sample}.

When considering the discounted-episodic return, a well known result \citep{blackwell1962discrete} is that there exists a critical discount factor $\gamma_\text{crit}$, such that $\forall \gamma_\text{crit} \leq \gamma : \argmax_{\pi \in \Pi} v^\pi_\gamma \subseteq \argmax_{\pi \in \Pi} v^\pi_1$. In other words, there exists some high enough discount factor that induces optimality on the total-reward undiscounted objective ($\gamma = 1$).

\section{Maximizing the Total Reward when $\gamma < \gamma_\text{crit}$}
Although the total reward is often the objective of interest, due to numerical optimization issues, empirical methods solve an alternative problem -- the $\gamma$-discounted objective \citep{mnih2015human,hessel2017rainbow,kapturowski2018recurrent,badia2020agent57}. This re-definition of the task enables deep RL methods to solve complex high dimensional problems. However, in the general case, this may converge to a sub-optimal solution \citep{blackwell1962discrete}. We present such an example in \cref{fig:  3 state mdp}, where taking the action left provides a reward $(1 + \gamma) r(a)$ and taking right $r(b) + \gamma r(c)$. For rewards such as $r(a) = 0.5$ and $r(b) = 0, r(c) = 2$ the critical discount is $\gamma_\text{crit} = \frac{1}{3}$. Here, $\gamma > \frac{1}{3}$ results in the policy going to the right (maximizing the total reward in the process) and $\gamma < \frac{1}{3}$ results in sub-optimal behavior, as the agent will prefer to go to the left.

We define a policy $\pi_\gamma$ as the optimal policy for the $\gamma$-discounted task, and the value of this policy on the \emph{total undiscounted reward} as $v^{\pi_\gamma}_1$.
\begin{equation*}
    \pi_\gamma = \argmax_{\pi \in \Pi} \mathbb{E}^\pi \left[ \sum_{t=0}^T \gamma^t r_t \right] \, , \enspace v^{\pi_\gamma}_1 = \mathbb{E}^\pi \left[ \sum_{t=0}^T r_t \right] \enspace .
\end{equation*}
The focus of this work is on learning with \emph{infeasible discount factors}, which we define as the set of discount factors $\{\gamma : \gamma \in [0, \gamma_\text{crit}) \}$ where $\gamma_\text{crit}$ is the minimal $\gamma \in [0, 1]$ such that $||v^{\pi^*} - v^{\pi_\gamma}||_\infty = 0$. Simply put, $\gamma_\text{crit}$ is the minimal discount factor where we can still solve the discounted task and remain optimal w.r.t. the total reward. This is motivated by the limitation imposed by current state-of-the-art algorithms (see \cref{fig: hopper blackwell}), which typically do not work well for discount factors close to 1.

\begin{figure}[t]
    \centering
    \includegraphics[width=0.28\linewidth]{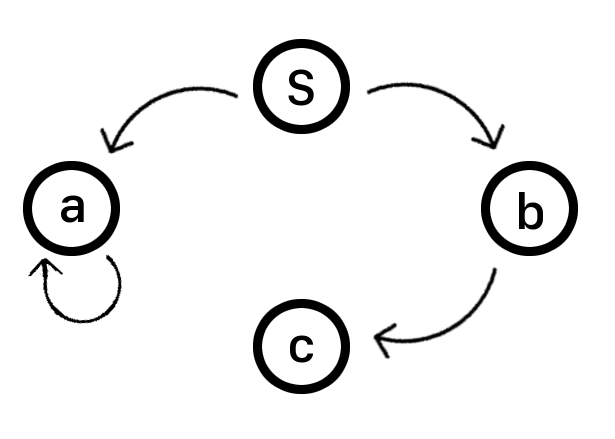}\hfill
    \includegraphics[width=0.58\linewidth]{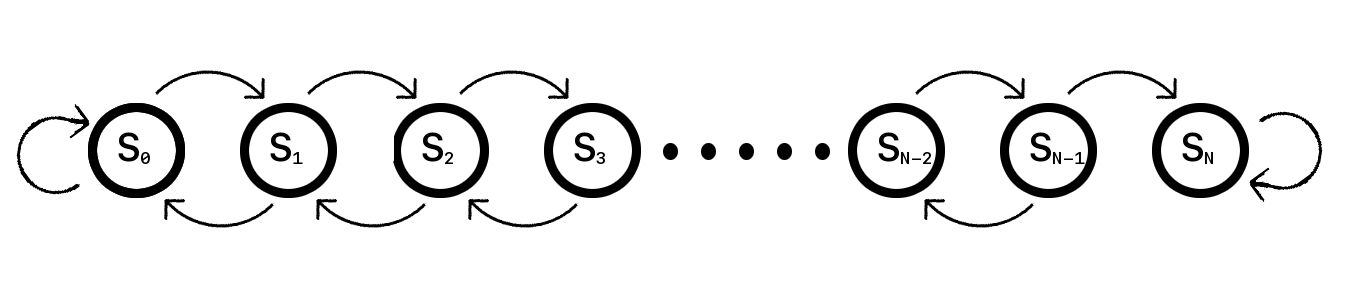}
    \caption{\textbf{Left:} A 4-state MDP with horizon 2. The agent starts at state $s$. Going left provides a reward $(1 + \gamma) r(a)$ and right $r(b) + \gamma r(c)$. For $r(a) = 0.5, r(b) = 0$ and $r(c) = 2$, the critical discount is $\gamma_\text{crit} = \frac{1}{3}$. Hence, when $\gamma > \frac{1}{3}$ the policy prefers to go right (maximizing the total reward) and when $\gamma < \frac{1}{3}$ the agent acts sub-optimally, going to the left. \textbf{Right:} A chain MDP of length N.}
    \label{fig:  3 state mdp}
\end{figure}

While previous methods attempted to overcome this issue through algorithmic improvements, e.g., attempting to increase the maximal $\gamma$ supported by the algorithm \citep{kapturowski2018recurrent}, we take a different approach. Our goal is to learn a surrogate reward function $\tilde r$ such that for any two trajectories $\tau_i, \tau_j$, where $\tau_k = \{ a^k_0, \ldots, a^k_{T-1} \}$, which induce returns ${\mathbb{E}_{\tau^i} [\sum_{t=0}^T \gamma^t \tilde r_t] \geq \mathbb{E}_{\tau^j} [\sum_{t=0}^T \gamma^t \tilde r_t]}$ then ${\mathbb{E}_{\tau^i} [\sum_{t=0}^T r_t] \geq \mathbb{E}_{\tau^j} [\sum_{t=0}^T r_t]}$.

In layman's terms, for a given discount $\gamma$ we aim to find a new reward function $\tilde r$. The agents objective will be to maximize the $\gamma$-discounted reward w.r.t. the new reward $\tilde r$, however, we will continue to evaluate it on the \emph{original} undiscounted reward $\sum_t r_t$. The goal of $\tilde r$ is to guide the agent towards better behavior on the original objective. Finally, it's important to state that this problem is ill-defined. Similar to the results from Inverse RL, there exist an infinite number of reward functions which satisfy these conditions; our goal is to find one of them.

\subsection{Existence of the Solution}
We begin by proving existence of the solution. All proofs are provided in the appendix.

\begin{theorem}\label{lemma: existence}
    For any $\gamma_1, \gamma_2 \in [0,1]$ there exists a reward $\tilde r$ such that
    \begin{align*}
        \tilde \pi^*_{\gamma_1} \in \argmax \mathbb{E}^\pi_{s \sim \mu} [\sum_{t=0}^T \gamma_1^t \tilde r_t | s_0 = s] = \argmax \mathbb{E}^\pi_{s \sim \mu} [\sum_{t=0}^T \gamma_2 r_t | s_0 = s] \ni \pi^*_{\gamma_2} \enspace .
    \end{align*}
\end{theorem}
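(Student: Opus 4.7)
My plan is a constructive one-line argument built around a time-indexed rescaling of the original reward. As noted in the Preliminaries, one can augment each state with the elapsed step $t$ to recover stationarity, so it is legitimate to consider a surrogate reward $\tilde r(s,t)$ that depends on $t$; on the augmented state space this is still a state-only reward in the sense of the paper's MDP definition.

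For the main case $\gamma_1>0$, I would simply take
$$\tilde r(s,t) \;=\; (\gamma_2/\gamma_1)^{t}\, r(s).$$
Then for every policy $\pi$ and every initial state $s$, linearity of expectation gives
$$\mathbb{E}^{\pi}\!\left[\sum_{t=0}^{T}\gamma_1^{t}\,\tilde r(s_t,t)\,\Big|\,s_0=s\right] \;=\; \mathbb{E}^{\pi}\!\left[\sum_{t=0}^{T}\gamma_2^{t}\, r(s_t)\,\Big|\,s_0=s\right],$$
and the identity is preserved after averaging over $s\sim\mu$. Hence the value maps $\pi\mapsto v^{\pi}_{\gamma_1}$ (under $\tilde r$) and $\pi\mapsto v^{\pi}_{\gamma_2}$ (under $r$) coincide pointwise on $\Pi$, so their argmax sets are literally the same set. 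This yields both containments in the theorem simultaneously: any $\tilde\pi^{*}_{\gamma_1}$ lies in the common argmax set, which also contains $\pi^{*}_{\gamma_2}$.

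The boundary case $\gamma_1=0$ needs a brief separate dispatch because the rescaling factor is undefined. If $\gamma_2=0$ as well, the choice $\tilde r = r$ works immediately. If $\gamma_2>0$, the $\gamma_1=0$ objective sees only $\tilde r(s_0,0)$ and is therefore constant in $\pi$, so the two argmax sets can only agree in the degenerate situation where the $\gamma_2$-optimal set is already all of $\Pi$; the natural reading of the statement thus restricts to $\gamma_1>0$, or equivalently one can remove this degeneracy by permitting $\tilde r$ to depend on $(s,a)$.

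I expect no real obstacle: the entire content is the algebraic identity $\gamma_1^{t}(\gamma_2/\gamma_1)^{t}=\gamma_2^{t}$. The only care required is the conceptual step of treating $\tilde r$ as a function on the time-augmented state space, which is already sanctioned by the Preliminaries and keeps $\tilde r$ a bona fide state-dependent reward.
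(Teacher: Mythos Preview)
Your argument is correct for $\gamma_1>0$, but it is a genuinely different construction from the paper's. The paper does not rescale by $(\gamma_2/\gamma_1)^t$; instead it defines a time-indexed \emph{shaping} reward $\tilde r^i = r + (\gamma_1-\gamma_2)\,\mathcal{P}^{\pi^*} v^{i+1,*}$ (cf.\ \cref{eqn: existance} and the $\kappa$-PI construction of Efroni et al.), and then argues via the Bellman operator that the surrogate objective has $\pi^*$ as its fixed point. Your rescaling is in fact the device the paper reserves for \cref{thm: reward ordering exists}, with $\gamma_2=1$. What your route buys is simplicity and a strictly stronger conclusion: the two objective functions coincide \emph{pointwise} on $\Pi$, not merely in their argmax sets, and you never need to know $\pi^*$ or $v^*$. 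What the paper's route buys is boundedness of $\tilde r$: their surrogate satisfies $\tilde R_{\max}\le R_{\max}+|\gamma_1-\gamma_2|\,\|v^*\|_\infty$, whereas your $\tilde r$ blows up like $(\gamma_2/\gamma_1)^T$ when $\gamma_2>\gamma_1$. This matters downstream, since \cref{lemma: simulation} and \cref{lemma: simulation at the end} are stated in terms of $\tilde R_{\max}$ and would be vacuous with an exponentially large surrogate reward.

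Your treatment of the boundary $\gamma_1=0$ is honest and essentially right: with a state-only reward the $\gamma_1=0$ objective is constant in $\pi$, so the argmax sets can coincide only in degenerate instances. The paper asserts that its construction works for all $\gamma_1,\gamma_2\in[0,1]$, but the same obstruction applies to its shaped reward as well, so you are not losing anything relative to the paper here.
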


Hence, by finding an optimal policy for the surrogate reward function $\tilde r^i$ in the $\gamma$-discounted case, we recover a policy that is optimal for the total reward.

\begin{remark}\label{remark: non-stationary}
    An important observation is that while the common theme in RL is to analyze MDPs with stationary reward functions; in our setting, as seen in \cref{eqn: existance}, the surrogate reward function is not necessarily stationary. Hence, in the general case, $\tilde r$ must be time dependent.
\end{remark}

Although there exists a reward $\tilde r$ for which $\tilde \pi_\gamma^* = \pi_1^*$ (i.e., optimal w.r.t. the total reward), as the agent is not necessarily provided with this reward apriori, it raises a question as to can this reward be found. The following result tells us that indeed, in the finite horizon setting, such a reward can be found.

\begin{theorem}\label{thm: reward ordering exists}
    For any $\gamma \in (0, 1]$, there exists a mapping $\tilde r (s,t)$ such that for any two trajectories $\tau_i, \tau_j$ and ${\mathbb{E}_{\tau^i} [\sum_{t=0}^T r_t] \geq \mathbb{E}_{\tau^j} [\sum_{t=0}^T r_t]}$ then ${\mathbb{E}_{\tau^i} [\sum_{t=0}^T \gamma^t \tilde r_t] \geq \mathbb{E}_{\tau^j} [\sum_{t=0}^T \gamma^t \tilde r_t]}$.
\end{theorem}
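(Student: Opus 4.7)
The plan is to construct $\tilde r$ explicitly by a time-dependent rescaling that exactly cancels the discounting. Specifically, I would set
\begin{equation*}
    \tilde r(s,t) \;=\; \gamma^{-t}\, r(s),
\end{equation*}
which is well-defined precisely because $\gamma \in (0,1]$ (this is why the statement excludes $\gamma = 0$). This construction is motivated directly by Remark~\ref{remark: non-stationary}: since we are already conceding that $\tilde r$ must be time-dependent, the simplest thing to do is use the time index to absorb the discount factor.

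Next I would verify that, for any trajectory $\tau = (s_0,\ldots,s_T)$, the discounted surrogate return collapses to the undiscounted original return:
\begin{equation*}
    \sum_{t=0}^{T} \gamma^{t}\, \tilde r(s_t, t) \;=\; \sum_{t=0}^{T} \gamma^{t}\, \gamma^{-t}\, r(s_t) \;=\; \sum_{t=0}^{T} r(s_t).
\end{equation*}
Taking expectations over the randomness in $\tau$ (transitions and, if applicable, policy) gives $\mathbb{E}_{\tau}\bigl[\sum_{t} \gamma^{t}\tilde r_t\bigr] = \mathbb{E}_{\tau}\bigl[\sum_{t} r_t\bigr]$. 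Applied to any pair $\tau_i,\tau_j$, the hypothesis $\mathbb{E}_{\tau^i}[\sum_t r_t] \geq \mathbb{E}_{\tau^j}[\sum_t r_t]$ is therefore \emph{equivalent} (not merely implied) to $\mathbb{E}_{\tau^i}[\sum_t \gamma^{t}\tilde r_t] \geq \mathbb{E}_{\tau^j}[\sum_t \gamma^{t}\tilde r_t]$, which is a strictly stronger conclusion than what the theorem requires.

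There is essentially no technical obstacle: the argument is a one-line telescoping identity. The only conceptual point worth flagging is that this construction is consistent with Theorem~\ref{lemma: existence} and Remark~\ref{remark: non-stationary} — the surrogate reward is inherently non-stationary, and its magnitude blows up like $\gamma^{-T}$ for small $\gamma$, which is a natural limitation of such a direct compensation scheme. The theorem asks only for existence of some mapping $\tilde r(s,t)$, so no further regularity or boundedness needs to be established; richer constructions (e.g., bounded rewards or max-margin surrogates discussed elsewhere in the paper) would be needed if one wanted additional properties, but they are not required here.
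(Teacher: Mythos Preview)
Your proposal is correct and matches the paper's own proof essentially verbatim: the paper also sets $\tilde r(s,t) = \gamma^{-t}\,\mathbb{E}[r(s)]$ and observes that the discount factors cancel, reducing the discounted surrogate return to the undiscounted original return. Your additional remarks about non-stationarity, the blow-up in magnitude for small $\gamma$, and the failure at $\gamma=0$ are all apt and align with the surrounding discussion in the paper.
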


While the above result motivates the online learning procedure, our focus in this work, it holds only for $\gamma$ strictly bigger than $0$. Thus, it many not be possible to reduce all RL tasks to a series of bandit problems (e.g., $\gamma = 0$ results in a myopic agent -- a multi-stage bandit).

\begin{prop}\label{thm: gamma zero doesnt work}
    \cref{thm: reward ordering exists} does not hold for $\gamma = 0$.
\end{prop}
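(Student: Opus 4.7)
The plan is to exhibit a concrete counterexample MDP for which no surrogate $\tilde r(s,t)$ can fulfil the trajectory-ordering property of \cref{thm: reward ordering exists} once $\gamma = 0$. The driving observation is that the $\gamma$-discounted surrogate sum degenerates: with the standard convention $0^0 = 1$, one has
\[
\sum_{t=0}^{T} \gamma^{t}\,\tilde r_t \;=\; \tilde r(s_0, 0),
\]
so the ``surrogate return'' is a function of the initial state alone and is completely blind to everything that occurs at times $t \geq 1$. (Under the alternative convention $0^0 = 0$ the sum is identically zero, which gives an even more brutal failure.) In particular, two trajectories that share an initial state receive identical surrogate returns under \emph{every} candidate $\tilde r$.

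I would then instantiate this observation on the four-state MDP of Figure~2 (left) with $r(a)=0.5$, $r(b)=0$, $r(c)=2$. Both the left trajectory, with undiscounted total $2r(a)=1$, and the right trajectory, with undiscounted total $r(b)+r(c)=2$, start at the common state $s$. By the observation above, any $\tilde r(s,t)$ assigns them the same surrogate return $\tilde r(s,0)$, even though their total returns differ strictly. Hence no $\tilde r$ can make the right trajectory have a strictly larger $\gamma=0$-discounted surrogate return than the left, which is precisely the separation that \cref{thm: reward ordering exists} needs in order to recover $\pi_1^{\ast}$ via \cref{lemma: existence}. Equivalently, because the surrogate return is policy-independent, every policy is surrogate-optimal at $\gamma = 0$, so the surrogate can never single out the total-reward optimum.

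The main obstacle is interpretational rather than technical: the literal ``$\geq$'' clause of \cref{thm: reward ordering exists} is trivially satisfied by $\tilde r \equiv 0$ for any $\gamma$, so I would explicitly frame the failure mode as the loss of the faithful (strict) ordering that the theorem must carry in order to be useful---that is, the existence of a $\tilde r$ whose $\gamma$-discounted optimum coincides with the undiscounted optimum. The example above shows that this faithful ordering is structurally unattainable at $\gamma=0$ whenever two trajectories share an initial state and have different total rewards, which is generic in the episodic setting. The only mild subtlety to handle cleanly is the $0^0$ convention, and both choices lead to the same negative conclusion, so the proposition follows from a one-line reduction after the counterexample is set up.
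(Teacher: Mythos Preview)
Your core mechanism---that at $\gamma=0$ the surrogate return collapses to $\tilde r_0$, so two trajectories that agree on the state receiving the $t=0$ reward are indistinguishable---is exactly the idea the paper uses. The difficulty is that your chosen instance, the four-state MDP of Figure~2, does \emph{not} realise this mechanism under the paper's own reward convention. From the description ``going left provides $(1+\gamma)r(a)$, going right $r(b)+\gamma r(c)$'' one reads off $r_0=r(a)$ on the left branch and $r_0=r(b)$ on the right, i.e.\ the first reward is attached to the state reached after the first action, not to the start state $s$. Consequently the $\gamma=0$ surrogate returns are $\tilde r(a,0)$ and $\tilde r(b,0)$, which \emph{can} be made to respect the total-reward ordering (take $\tilde r(a,0)<\tilde r(b,0)$). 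Your claim that both collapse to $\tilde r(s,0)$ mixes conventions: you exclude $s$ when computing the undiscounted totals $2r(a)$ and $r(b)+r(c)$, yet include it when evaluating the surrogate at $t=0$.

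The paper sidesteps this by constructing trajectories that share the first \emph{post}-initial state: $\tau_1:s_A\!\to\! s_B\!\to\! s_C$ and $\tau_2:s_A\!\to\! s_B\!\to\! s_D$ both land in $s_B$ at $t=0$, so $U(\tau_1)=U(\tau_2)=\tilde r(s_B)$ regardless of how $\tilde r$ is chosen; a third trajectory $\tau_3:s_A\!\to\! s_E$ with total reward strictly between those of $\tau_1$ and $\tau_2$ then makes the failure of strict ordering explicit. Your discussion of why the literal ``$\geq$'' clause must be read in the strict sense is sound (and in fact more carefully argued than in the paper), so the only repair needed is to replace the Figure~2 MDP by an example in which the two incomparable trajectories share their first rewarded state---precisely what the $s_A\!\to\! s_B$ prefix accomplishes.
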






\subsection{Robustness}

In addition to analyzing the existence of the solution, we analyze the robustness gained by learning with an optimal surrogate reward function in the $\gamma$-discounted setting. Here, we assume that the reward is given and the agent solves the finite horizon $\gamma$-discounted reward $\tilde r$, yet is evaluated on the undiscounted reward $r$.

\citet{kearns1998near} proposed the Simulation Lemma, which enables analysis of the value error when using an empirically estimated MDP. By extending their results to the finite horizon $\gamma$-discounted case, we show that in addition to improving rates of convergence, the optimal surrogate reward increases the robustness of the MDP to uncertainty in the transition matrix.

\begin{prop}\label{lemma: simulation}
    If $\tilde r$ enables recovering the optimal policy (e.g., \cref{eqn: existance}), $\max_{s,a} |\mathcal{\hat P}(s,a) - \mathcal{P}(s,a)| \leq \epsilon_P$, where $\hat P$ is the empirical probability transition estimates, and $\tilde R_\text{max} = \max_{s \in \mathcal{S}} \tilde r(s)$, then
    \begin{equation*}
        \forall s \in \mu : || v^\pi_{\hat P} - v^\pi_P ||_\infty  \leq \frac{\gamma (1 - \gamma^T) \tilde R_\text{max}}{2 (1-\gamma)^2} \epsilon_P \enspace .
    \end{equation*}
\end{prop}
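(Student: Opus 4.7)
The plan is to mimic the Kearns--Singh simulation argument, but carried out over a $T$-step horizon with a discount $\gamma$ and with the common reward $\tilde r$ on both MDPs (so reward differences cancel). Let $v_t^\pi$ denote the $t$-step truncated value of $\pi$ under the true kernel $\mathcal{P}$, with $v_0^\pi \equiv 0$, and let $\hat v_t^\pi$ be the corresponding value under $\hat{\mathcal{P}}$. Both satisfy the same Bellman recursion $v_t^\pi = \tilde r + \gamma P^\pi v_{t-1}^\pi$ (resp.\ with $\hat P^\pi$). The full value $v^\pi_P$ is then $v^\pi_T$, and analogously for $\hat P$.

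First I would add and subtract $\gamma \hat P^\pi v_{t-1}^\pi$ to obtain the one-step decomposition
\begin{equation*}
    v_t^\pi - \hat v_t^\pi \;=\; \gamma\bigl(P^\pi - \hat P^\pi\bigr)\,v_{t-1}^\pi \;+\; \gamma\,\hat P^\pi\bigl(v_{t-1}^\pi - \hat v_{t-1}^\pi\bigr).
\end{equation*}
Taking the sup-norm and using that $\hat P^\pi$ is a stochastic matrix (so $\|\hat P^\pi w\|_\infty \le \|w\|_\infty$), together with the standard centering trick $(p-q)^\top v = (p-q)^\top(v - c\mathbf{1})$ applied with $c$ the midpoint of $v$, gives the recursion
\begin{equation*}
    E_t \;\le\; \tfrac{1}{2}\gamma\,\epsilon_P\,\|v_{t-1}^\pi\|_\infty \;+\; \gamma\,E_{t-1},\qquad E_t := \|v_t^\pi - \hat v_t^\pi\|_\infty,\quad E_0=0,
\end{equation*}
where the factor $\tfrac12$ comes from interpreting $\epsilon_P$ as a row-wise $L_1$/total-variation bound on $\mathcal{P}(\cdot\mid s,a) - \hat{\mathcal{P}}(\cdot\mid s,a)$.

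Next I would bound $\|v_{t-1}^\pi\|_\infty \le \tilde R_{\max}/(1-\gamma)$ using the geometric sum of per-step rewards, unroll the recursion, and sum the resulting geometric series:
\begin{equation*}
    E_T \;\le\; \sum_{k=1}^{T} \gamma^k \cdot \tfrac{\epsilon_P}{2} \cdot \frac{\tilde R_{\max}}{1-\gamma} \;=\; \frac{\tilde R_{\max}\,\epsilon_P}{2(1-\gamma)}\cdot\frac{\gamma(1-\gamma^T)}{1-\gamma} \;=\; \frac{\gamma(1-\gamma^T)\,\tilde R_{\max}}{2(1-\gamma)^2}\,\epsilon_P,
\end{equation*}
which is exactly the claimed bound.

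\paragraph{What I expect to be the main obstacle.}
The proof itself is a routine telescoping argument; the only real subtlety is calibrating constants. The statement writes $\max_{s,a}|\hat{\mathcal{P}}(s,a) - \mathcal{P}(s,a)| \le \epsilon_P$, which is ambiguous between elementwise, $L_1$, and TV conventions, and the displayed factor of $\tfrac12$ forces the $L_1$/TV reading. I would also need to make the slightly loose choice of bounding $\|v_{t-1}^\pi\|_\infty$ by the \emph{infinite}-horizon cap $\tilde R_{\max}/(1-\gamma)$ rather than the tighter truncated value $(1-\gamma^{t-1})\tilde R_{\max}/(1-\gamma)$; this is what makes the geometric sum collapse cleanly into the stated closed form. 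The hypothesis that $\tilde r$ induces the optimal policy is not actually used in the derivation of the bound itself --- it only matters for the surrounding interpretation, namely that the policy whose robustness we are measuring is the one that is optimal for the original total-reward task.
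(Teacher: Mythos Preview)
Your argument is correct and follows the same simulation-lemma template as the paper: the identical one-step add-and-subtract decomposition, the same centering trick to extract the factor $\tfrac12$, and the same reading of $\epsilon_P$ as an $L_1$/TV bound. Your remarks about the ambiguity of the $\epsilon_P$ convention and about the optimal-policy hypothesis being unused in the bound itself are both on point.

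The one packaging difference worth noting: the paper does not time-index and unroll. It writes a single self-referential inequality
\[
\|v^\pi_{\hat P} - v^\pi_P\|_\infty \;\le\; \gamma\,\epsilon_P\,\frac{\tilde R_{\max}(1-\gamma^T)}{2(1-\gamma)} \;+\; \gamma\,\|v^\pi_{\hat P} - v^\pi_P\|_\infty
\]
and solves algebraically, so the $(1-\gamma^T)$ factor comes from bounding the $T$-step value in the centering constant, while the extra $(1-\gamma)^{-1}$ comes from the contraction. In your version the roles are swapped: you bound $\|v_{t-1}^\pi\|_\infty$ by the loose infinite-horizon cap $\tilde R_{\max}/(1-\gamma)$, and the $(1-\gamma^T)$ appears instead from summing the unrolled recursion over $T$ steps. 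Both routes land on the same constant; yours is arguably the cleaner finite-horizon argument, since the paper's contraction step tacitly treats the value as a fixed point of a stationary Bellman operator.
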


As we are analyzing the finite-horizon objective, it is natural to consider a scenario where the uncertainty in the transitions $|\hat P(s' | s, a) - P(s' | s, a)|$ is concentrated at the end of the trajectories. This setting is motivated by practical observations -- the agent commonly starts in the same set of states, hence, most of the data it observes (higher confidence) is concentrated in those regions.

The following proposition presents how the bounds improve by a factor of $\gamma^{T-L}(1-\gamma^L)/(1-\gamma^T)$ when the uncertainty is concentrated near the end of the trajectories, as a factor of both the discount $\gamma$ and the size of the uncertainty region defined by $\mathcal{S}_L$.

\begin{prop}\label{lemma: simulation at the end}
    We assume the MDP can be factorized, such that all states $s$ that can be reached within $T-L \leq t \leq T$ steps are unreachable for $t < T-L$, $\forall \pi \in \Pi$. We denote the ${T-L \leq t \leq T}$ reachable states by $\mathcal{S}_L$.
    
    If $\max_{s \in \mathcal{S}_L, a} ||\mathcal{\hat P}(s,a) - \mathcal{P}(s,a)||_1 \leq \epsilon_P$ and $\max_{s \in \mathcal{S}\SLASH\mathcal{S}_L, a} ||\mathcal{\hat P}(s,a) - \mathcal{P}(s,a)||_1 = 0$ then
    \begin{equation*}
        \forall s \in \mu : || v^\pi_{\hat P} - v^\pi_P ||_\infty  \leq \gamma^{T-L} \frac{\gamma (1 - \gamma^L) \tilde R_\text{max}}{2 (1-\gamma)^2} \epsilon_P \enspace .
    \end{equation*}
\end{prop}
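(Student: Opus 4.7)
The plan is to reduce this proposition to the previous Simulation Lemma (\cref{lemma: simulation}) by exploiting the factorization assumption to split every trajectory into a ``clean'' prefix of length $T-L$ and a ``noisy'' suffix of length $L$ in which all of the transition uncertainty lives. The $\gamma^{T-L}$ factor out front will then arise simply because the suffix rewards are discounted relative to the start of the episode, and the $(1-\gamma^L)$ factor will come from applying the prior lemma to a sub-problem of horizon $L$.

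First I would observe that, by the factorization hypothesis, any state $s_t$ with $t < T-L$ lies in $\mathcal{S} \setminus \mathcal{S}_L$, where $\hat P$ and $P$ agree exactly. Consequently, for any fixed policy $\pi$, the joint distribution over the prefix $(s_0,\ldots,s_{T-L})$ induced by $\hat P$ is identical to the one induced by $P$. In particular, the marginal distribution of $s_{T-L}$ is the same in both MDPs, and the (discounted) contributions of the first $T-L$ rewards cancel in the difference $v^\pi_{\hat P}(s) - v^\pi_P(s)$.

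Next I would decompose the value function using the tower property. Writing
\begin{equation*}
    u^\pi_{M}(s') \;=\; \mathbb{E}^\pi_{M}\!\left[\sum_{k=0}^{L} \gamma^{k}\,\tilde r(s_{T-L+k}) \,\middle|\, s_{T-L}=s'\right]
\end{equation*}
for $M \in \{\hat P, P\}$, the argument above gives
\begin{equation*}
    v^\pi_{\hat P}(s) - v^\pi_P(s) \;=\; \gamma^{T-L}\,\mathbb{E}^\pi\!\left[u^\pi_{\hat P}(s_{T-L}) - u^\pi_{P}(s_{T-L})\right],
\end{equation*}
where the outer expectation is over the shared prefix distribution. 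Taking absolute values and then the $\infty$-norm in $s$ gives
\begin{equation*}
    \|v^\pi_{\hat P} - v^\pi_P\|_\infty \;\le\; \gamma^{T-L}\,\|u^\pi_{\hat P} - u^\pi_{P}\|_\infty.
\end{equation*}

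Finally, $u^\pi_{\hat P}$ and $u^\pi_{P}$ are exactly the value functions of an MDP whose state space is $\mathcal{S}_L$, whose horizon is $L$, whose rewards are bounded by $\tilde R_{\max}$, and whose transitions differ in $\ell_1$ by at most $\epsilon_P$ everywhere. Applying \cref{lemma: simulation} (with $T$ replaced by $L$) to this sub-MDP yields
\begin{equation*}
    \|u^\pi_{\hat P} - u^\pi_{P}\|_\infty \;\le\; \frac{\gamma(1-\gamma^L)\tilde R_{\max}}{2(1-\gamma)^2}\,\epsilon_P,
\end{equation*}
and combining the two displays gives the claimed bound. The main conceptual step is the prefix-cancellation argument, which converts the global horizon $T$ into an effective horizon $L$; once that is in place, everything else is a mechanical application of the previous proposition. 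The only subtle point to verify is that the non-stationary policy $\pi$ (cf.\ \cref{remark: non-stationary}) can indeed be restricted to the suffix to define a well-posed sub-MDP of horizon $L$, which follows because $\pi$ at time $T-L+k$ depends only on $(s_{T-L+k},k)$ after appending the remaining horizon to the state.
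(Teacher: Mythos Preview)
Your proposal is correct and follows essentially the same approach as the paper: factorize the MDP into the clean prefix and the noisy $L$-horizon suffix, apply \cref{lemma: simulation} to the suffix sub-MDP, and pick up the $\gamma^{T-L}$ factor from the discounting over the $T-L$ steps needed to reach $\mathcal{S}_L$. Your write-up is in fact more explicit than the paper's (you spell out the prefix-cancellation and tower-property decomposition that the paper leaves implicit), but the underlying strategy is identical.
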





\section{Learning the Surrogate Objective}

Previously, we have discussed the existence and benefits provided by a reward function, which enables learning with smaller discount factors. In this section, we focus on how to learn the surrogate reward function $\tilde r$. 


With a slight abuse of notation, we define the set of $K$ previously seen trajectories $\{ \tau_i \}_{i=0}^K = \{ (r_0^{(i)}, s_0^{(i)}), \ldots, (r_T^{(i)}, s_T^{(i)}) \}_{i=0}^K$ and a sub-trajectory by $\tau_i^{(t)} = \{ (r_t^{(i)}, s_t^{(i)}), \ldots, (r_T^{(i)}, s_T^{(i)}) \}$. Our goal is to find a reward function $\tilde r : \mathcal{S} \mapsto \mathbb{R}$ that satisfies the ranking problem, for any $i,j$ and $n,m \in [0, T-1]$. That is, assuming $\sum_{t=n}^T r^{(i)}_t > \sum_{t=m}^T r^{(j)}_t$, learning $\tilde r$ is done by solving:
\begin{equation}\label{eqn: loss}
    \mathcal{L}(\tilde r; \tau_i^n, \tau_j^m) = \frac{e^{\left(\sum_{t=n}^T \gamma^{t-n} \tilde r \left(s_t^{(i)}\right) \right)}}{e^{\left(\sum_{t=n}^T \gamma^{t-n} \tilde r\left(s^{(i)}_t\right)\right)} + e^{\left(\sum_{t=m}^T \gamma^{t-m} \tilde r\left(s^{(j)}_t\right)\right)}} \, .
\end{equation}

\begin{remark}\label{rem: tabular}
    For a tabular state space, we may represent a trajectory $\tau_i$ by $\phi(\tau_i) = \sum_{t=0}^T \gamma^t \mathds{1} (s = s_t) \in \mathbb{R}^{|\mathcal{S}|}$, and $\tilde r$ as a linear mapping. This is a logistic regression \citep{soudry2018implicit}, where the goal is to find the max-margin separator between trajectories 
    \begin{equation}
        \mathcal{L}(\tilde r; \tau_i^n, \tau_j^m) = \frac{e^{\left(\phi^T(\tau_i^n) \tilde r\right)}}{e^{\left(\phi^T(\tau_i^n) \tilde r\right)} + e^{\left(\phi^T(\tau_j^m) \tilde r \right)}} \enspace .
    \end{equation}
\end{remark}

As we are concerned with maximizing the margin between all trajectories, the task becomes

\begin{equation}\label{eqn: diff objective}
    \min_{\tilde r \in \mathcal{R}} \mathbb{E}_{\tau_i, \tau_j} \left[ \sum_{n,m \in [0,T-1]} \mathcal{L} (\tilde r; \tau_i^n, \tau^m_j) \right] \enspace .
\end{equation}

As opposed to \citet[T-REX]{brown2019extrapolating}, we are not provided with apriori demonstrations and a ranking between them, instead, we follow an online scheme (\cref{alg: alg} and \cref{fig: reward tweaking}). As a part of the training phase, the trajectories observed by the policy $\pi$ are provided to the reward learner. The reward learner provides the most recent learned reward $\tilde r$ to the policy. As shown in \cref{thm: reward ordering exists}, this ranking problem (\cref{eqn: diff objective}) has a solution; thus (informally), assuming the policy has a non-zero support on each trajectory (i.e., each trajectory is observed infinitely often), \cref{alg: alg} will converge (in the limit) to the optimal policy.


\begin{algorithm}[t]
	\caption{Adaptive Reward Tweaking}
	\label{alg: alg}
	\begin{algorithmic}
		\STATE {\bf Initialize} surrogate reward $\tilde r (s)$, policy $\pi$ and replay buffer $\mathcal{B}$
		\FOR{$i = 0, \ldots$}
    		\STATE Play trajectory $\tau_i = \{ (s_0^i, a_0^i \sim \pi (s_0^i), r_0^i), \ldots \}$
    		\STATE Append trajectory $\tau_i$ to buffer $\mathcal{B}$
    		\STATE Sample batch of trajectories $\{ \tau_k^{j_k} | j_k \sim |\tau_k| \}_{k=0}^N \sim \mathcal{B}$
    		\STATE Minimize $\sum_{k \neq h} \mathcal{L} (\tilde r; \tau_k^{j_k}, \tau_h^{j_h})$
    		\STATE Optimize $\pi$ w.r.t. $\tilde r$
		\ENDFOR
		\STATE {\bf Return} $\pi$
	\end{algorithmic}
\end{algorithm}

\section{Experiments}



\subsection{Tabular Reinforcement Learning}

To understand how reward tweaking behaves, we analyze a grid-world, depicted in \cref{fig: puddle}. Here, the agent starts at the bottom left corner and is required to navigate towards the goal (the wall on the right). Each step, the agent receives a reward of $r_\text{step} = -1$, whereas the goal, an absorbing state, provides $r_\text{goal} = 0$. Clearly, the optimal solution is the stochastic shortest path. However, at the center resides a puddle where $r_\text{puddle} = -0.5$. The puddle serves as a \emph{distractor} for the agent, creating a critical point at $\gamma_\text{crit} \approx 0.7$. While the optimal \emph{total reward} is $-6.5$, an agent trained \emph{without} reward tweaking for $\gamma < \gamma_\text{crit}$ prefers to reside in the puddle and obtains a \emph{total reward} of $-11$.

For analysis of plausible surrogate rewards, obtained by \emph{reward tweaking}, we present the heatmap of a reward in each state, based on \cref{eqn: existance}. As the reward is non-stationary, we plot the reward at the minimal reaching time, e.g., $\tilde r (s, d(s, s_0))$ where $d$ is the Manhattan distance in the grid. The arrows represent the gradient of the surrogate reward $\nabla_s \tilde r(s)$, i.e., the policy of a myopic agent. Notice that when $\gamma = 1$, the default reward signal is sufficient. However, in this domain, when the discount declines below the critical point, the 1-step surrogate reward (e.g., $\gamma = 0$) becomes informative enough for the myopic agent to find the optimal policy (the arrows represent the myopic policy). In other words, reward tweaking with small discount factors leads to denser rewards, resulting in an easier training phase for the policy.

\begin{figure*}[t]
    \centering
    \includegraphics[width=0.32\linewidth,trim={1cm 0.6cm 1cm 0.8cm},clip]{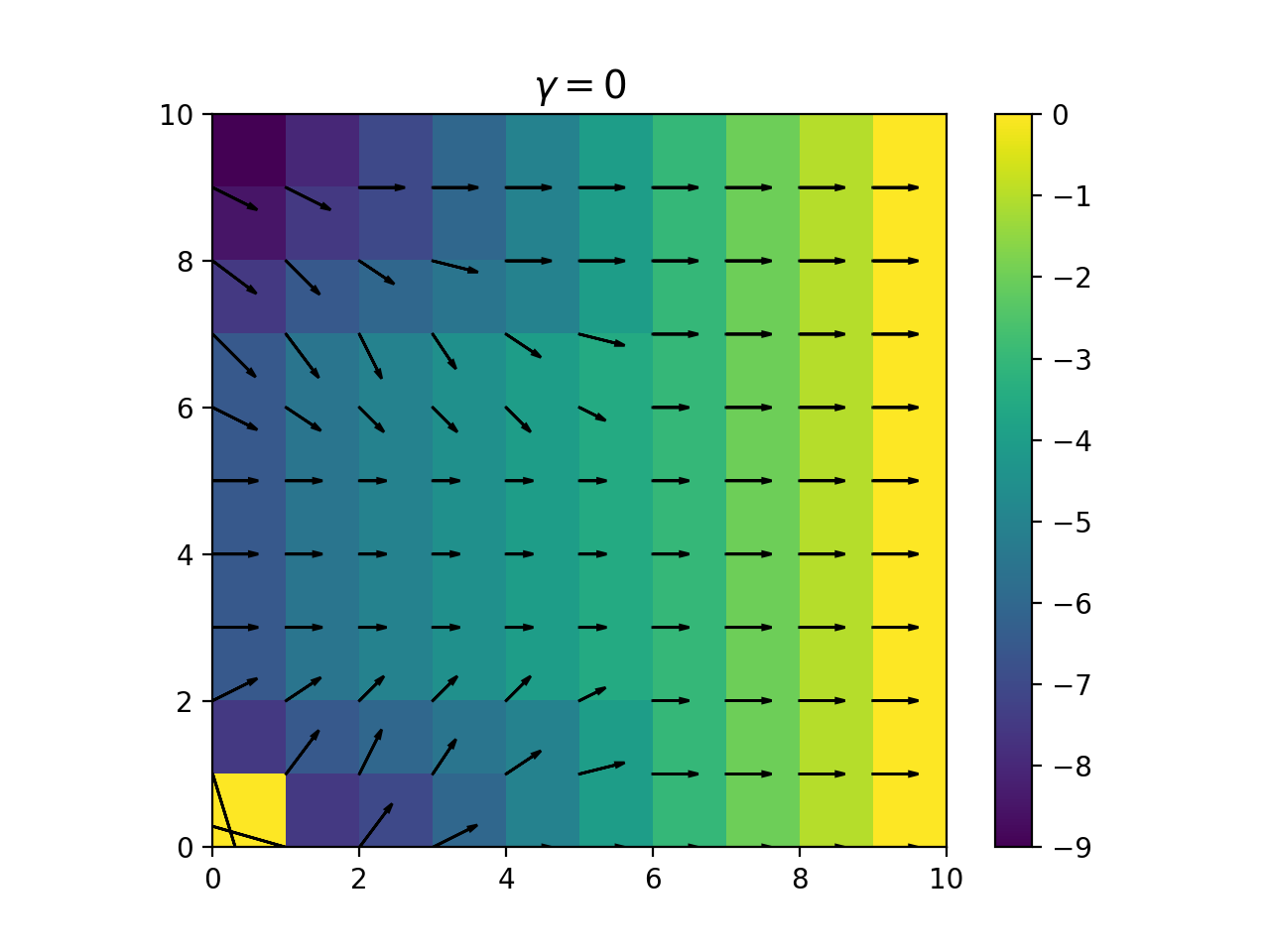}
    \includegraphics[width=0.32\linewidth,trim={1cm 0.6cm 1cm 0.8cm},clip]{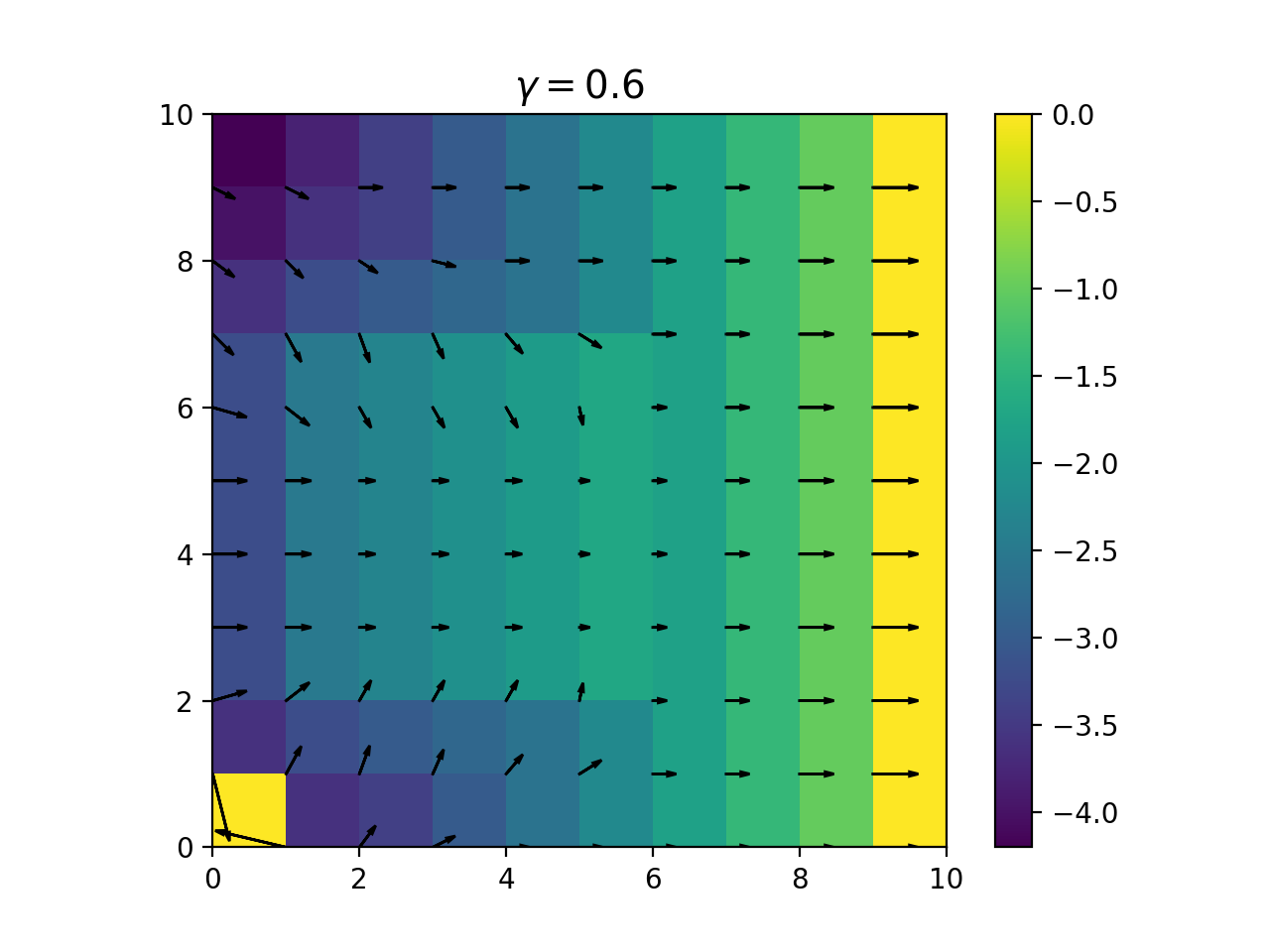}
    \includegraphics[width=0.32\linewidth,trim={1cm 0.6cm 1cm 0.8cm},clip]{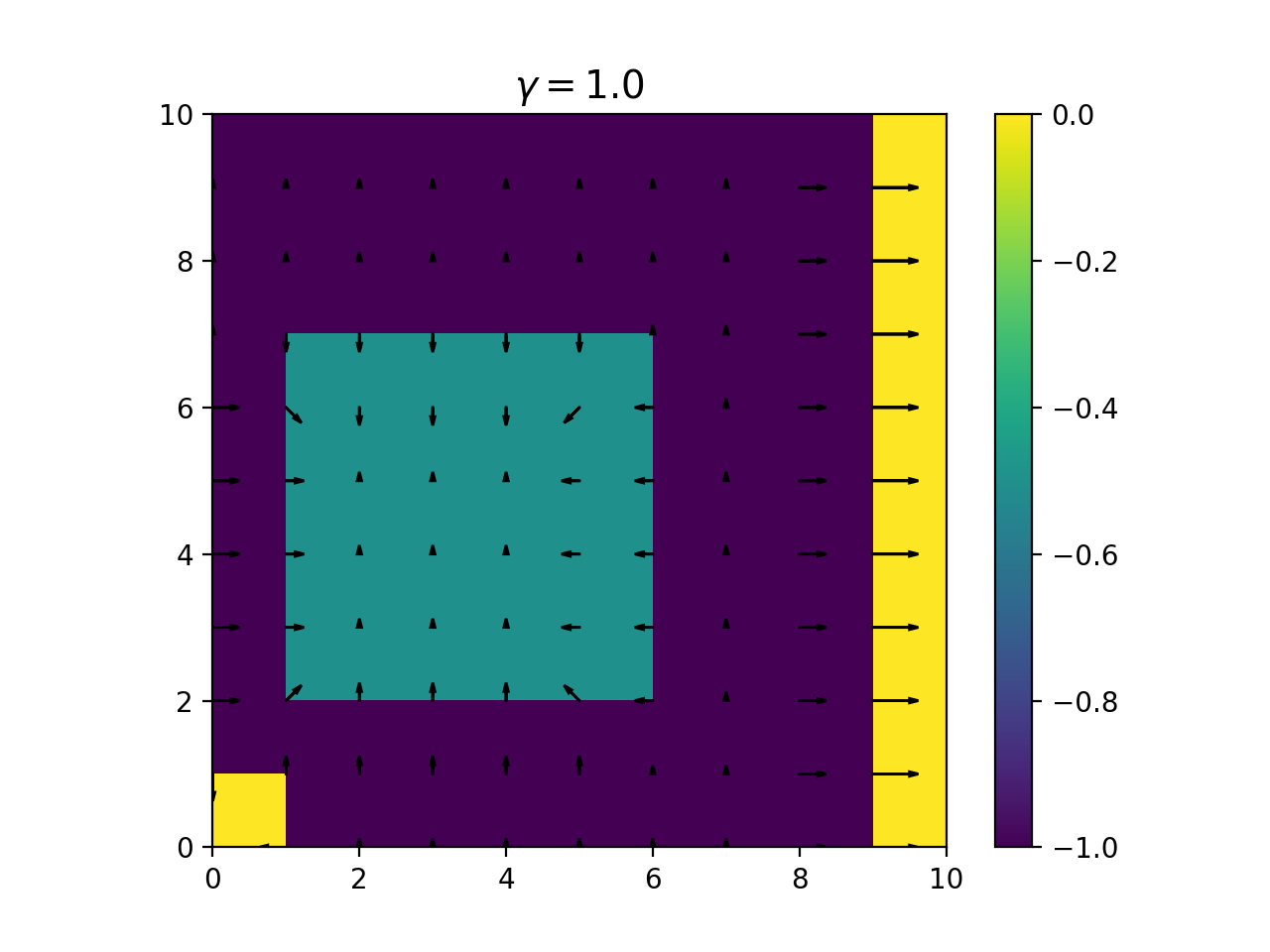}
    \caption{The surrogate reward in the puddle world domain as a function of $\gamma$. The agent starts at the bottom left. The objective is to reach the right wall. The puddle provides a small reward, thus acts as a \emph{distractor} when $\gamma < \gamma_\text{crit}$, attracting the agent to remain inside it. The arrows represent the policy of a myopic agent (solving for $\gamma = 0$), i.e., the gradient of the reward function.}
    \label{fig: puddle}
\end{figure*}

\subsection{Deep Reinforcement Learning}

\begin{wrapfigure}{R}{0.5\textwidth}\vspace{-0.75cm}
    \centering
    \includegraphics[width=0.49\textwidth]{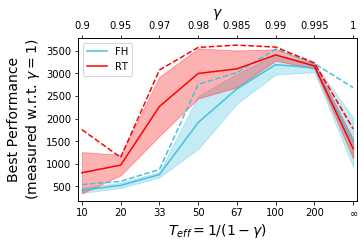}
    \caption{Hopper-v2 experiments, comparing the TD3 finite horizon baseline [FH] with our method, Reward Tweaking [RT]. The top X axis denotes the discount factor $\gamma$, and the bottom presents the effective planning horizon $T_{eff} = 1/(1-\gamma)$. The dashed line shows the max performance over all seeds, the solid is the mean and the colored region is the 95\% confidence bound.}
    \label{fig: hopper pareto}
    \vspace{-0.6cm}
\end{wrapfigure}

In addition to analyzing reward tweaking in the tabular domain, we evaluate it on a complex task, using approximation schemes (deep neural networks). We focus on Hopper \citep{todorov2012mujoco}, a finite-horizon ($T=1,000$) robotic control task where the agent controls the torque applied to each motor. We build upon the TD3 algorithm \citep{fujimoto2018addressing}, a deterministic off-policy policy-gradient method \citep{silver2014deterministic,lillicrap2015continuous}, that combines two major components -- an actor and a critic. The actor maps from state $s$ to action $a \in [-1,1]^d$, i.e., the policy; and the critic $Q(s,a)$ is trained to predict the expected return. Our implementation is based on the original code provided by \citet{fujimoto2018addressing}, including their original hyper-parameters. For each value of $\gamma$, we compare 5 randomly sampled seeds, showing the mean and $95\%$ confidence interval of the mean. This results in a total of 40 seeds.

\subsubsection{Experimental Details}
We compare reward tweaking with the baseline TD3 model, adapted to the finite horizon setting. We provide additional experiments details in the supplementary material, in addition to numerical results. Graphical results are provided in \cref{fig: hopper pareto}.

As we are interested in a finite-horizon task, the reward tweaking experiments are performed on a non-stationary version of TD3, i.e., we concatenate the normalized remaining time $\tilde t = (T-t)/T$ to the state. In our experiments\footnote{Code accompanying this paper is provided in \href{https://bit.ly/2O1klVu}{bit.ly/2O1klVu}}, $\tilde r$ and $\pi$ are learned in parallel (\cref{fig: reward tweaking}). In order to learn $\tilde r$, we sample trajectories up to length of $L$ steps. This is a computational consideration -- a larger $L$ will lead to better performance, but will result in a longer run-time. Experimentally, we found $L=100$ to suffice. As $\tilde r$ is trained on $L \ll T$, the model is required to extrapolate for $L < t \leq T$ (a similar trade-off exists in standard back-propagation through time \citep{sutskever2013training}).

\definecolor{Gray}{gray}{0.95}
\newcolumntype{g}{>{\columncolor{Gray}}c}

\subsubsection{Results}
We present the results in \cref{fig: hopper pareto} and the numerical values in the supplementary material. For each discount and model, we run a simulation using 5 random seeds. For each seed, at the end of training, we re-evaluate (over $100$ episodes) the policy that is believed to be best. The results we report are the average, STD and the max of these values (over the simulations performed).

\textbf{Baseline analysis:} The baseline behaves best at $\gamma \approx 0.99$. This leads to two important observations. (1) Although the model is fit for a finite-horizon task, and the horizon is indeed finite, it fails when the effective planning horizon increases above $100$ ($\gamma > 0.99$). This suggests that practical algorithms have a maximal feasible discount value, such that for larger values they suffer from performance degradation. (2) As $\gamma$ decreases, the agent becomes myopic leading to sub-optimal behavior.

These results motivate the use of reward tweaking. While reward tweaking is incapable of overcoming the first problem, it is designed to mitigate the second -- the failure which occurs due to not satisfying the Blackwell optimality criteria \citep{blackwell1962discrete}.

\textbf{Reward Tweaking:} (1) As hypothesised, reward tweaking improves the performance across all feasible discount values, i.e., values where TD3 is capable of learning ($\gamma \leq 0.99$). (2) Learning the reward becomes harder as $\gamma$ decreases, hence the variance across seeds grows as the planning horizon decreases. Finally, (3) focusing on the maximal performance across seeds, we see that while this is a hard task, reward tweaking improves the performance in terms of average and maximal score -- and even improves the average performance at $\gamma = 0.99$ (the maximal feasible discount for the TD3 baseline on this task) from $\approx 3200$ to $\approx 3400$ ($+6\%$ performance improvement).

\section{Conclusions}

In this work we tackled the problem of learning with Blackwell \emph{sub}-optimality; the algorithm aims to maximize the total reward, yet is limited to learning with sub-optimal discount factors. We proposed \emph{reward tweaking}, a method for learning a reward function which induces better behavior on the original task. Our proposed method enables the learning algorithm to find optimal policies, by learning a new reward function $\tilde r$. We showed, in the tabular case, that there indeed exists such a reward function. In addition, we proposed an objective to solving this task, which is equivalent to finding the max-margin separator between trajectories.

We evaluated reward tweaking in a tabular setting, where the reward is ensured to be recoverable, and in a high dimensional continuous control task. In the tabular case, visualization enables us to analyze the structure of the learned rewards, suggesting that reward tweaking learns dense reward signals -- these are easier for the learning algorithm to learn from. In the Hopper domain, our results show that reward tweaking is capable of outperforming the baseline across most planning horizons. Finally, we observe that while learning a good reward function is hard for short planning horizons, in some seeds, reward tweaking is capable of finding good reward functions. This suggests that further improvement in stability and parallelism of the reward learning procedure may further benefit this method.

Although in terms of performance reward tweaking has many benefits, this method has a computational cost. Even though learning the surrogate reward does not require additional interactions with the environment (sample complexity), it does require a significant increase in computation power (wallclock time). As we sample partial trajectories and compute the loss over the entirety of them, this requires many more computations when compared to the standard TD3 scheme. Thus, it is beneficial to use reward tweaking when the performance degradation, due to the use of infeasible discount factors, is large, or when sample efficiency is of utmost importance.

We focused on scenarios where the algorithm is incapable of solving for the correct $\gamma$. Future work can extend reward tweaking to additional capabilities. For instance, to stabilize learning, many works clip the reward to $[-1, 1]$ \citep{mnih2015human}. However, these changes, like the discount factor, change the underlying task the agent is solving. Reward tweaking can be used to find rewards which satisfy these constraints, yet still induce optimal behavior on the total original reward.

Finally, while reward tweaking has a computational overhead, it comes with significant benefits. Once an optimal surrogate reward has been found, it can be re-used in future training procedures. As it enables the agent to solve the task using smaller discount factors, given a learned $\tilde r$, the policy training procedure is potentially faster.

\clearpage

\section{Broader Impact}
Reward tweaking tackles a fundamental problem in applied reinforcement learning. As reinforcement learning algorithms commonly optimize the discounted infinite horizon objective, they underperform on the required metric -- the undiscounted total reward.

Reward tweaking bridges the gap between the objective we want to solve and that which our algorithms are capable of solving. As such, reward tweaking has the potential to impact reinforcement learning on a broad scale across almost all application levels. These applications range from physical applications (e.g., robotics, autonomous vehicles, warehouse automation, and more) and digital applications (e.g., marketing, recommendation systems, and more).

%
%
\bibliographystyle{plainnat}
\bibliography{main.bib}

\begin{thebibliography}{32}
\providecommand{\natexlab}[1]{#1}
\providecommand{\url}[1]{\texttt{#1}}
\expandafter\ifx\csname urlstyle\endcsname\relax
  \providecommand{\doi}[1]{doi: #1}\else
  \providecommand{\doi}{doi: \begingroup \urlstyle{rm}\Url}\fi

\bibitem[Abbeel and Ng(2004)]{abbeel2004apprenticeship}
Pieter Abbeel and Andrew~Y Ng.
\newblock Apprenticeship learning via inverse reinforcement learning.
\newblock In \emph{Proceedings of the twenty-first international conference on
  Machine learning}, page~1. ACM, 2004.

\bibitem[Amit et~al.(2020)Amit, Ciosek, and Meir]{amit2020discount}
Ron Amit, Kamil Ciosek, and Ron Meir.
\newblock Discount factor as a regularizer in reinforcement learning.
\newblock In \emph{International Conference on Machine Learning}, 2020.

\bibitem[Badia et~al.(2020)Badia, Piot, Kapturowski, Sprechmann, Vitvitskyi,
  Guo, and Blundell]{badia2020agent57}
Adri{\`a}~Puigdom{\`e}nech Badia, Bilal Piot, Steven Kapturowski, Pablo
  Sprechmann, Alex Vitvitskyi, Daniel Guo, and Charles Blundell.
\newblock Agent57: Outperforming the atari human benchmark.
\newblock \emph{arXiv preprint arXiv:2003.13350}, 2020.

\bibitem[Blackwell(1962)]{blackwell1962discrete}
David Blackwell.
\newblock Discrete dynamic programming.
\newblock \emph{The Annals of Mathematical Statistics}, pages 719--726, 1962.

\bibitem[Brown et~al.(2019)Brown, Goo, Nagarajan, and
  Niekum]{brown2019extrapolating}
Daniel Brown, Wonjoon Goo, Prabhat Nagarajan, and Scott Niekum.
\newblock Extrapolating beyond suboptimal demonstrations via inverse
  reinforcement learning from observations.
\newblock In \emph{International Conference on Machine Learning}, pages
  783--792, 2019.

\bibitem[Chentanez et~al.(2005)Chentanez, Barto, and
  Singh]{chentanez2005intrinsically}
Nuttapong Chentanez, Andrew~G Barto, and Satinder~P Singh.
\newblock Intrinsically motivated reinforcement learning.
\newblock In \emph{Advances in neural information processing systems}, pages
  1281--1288, 2005.

\bibitem[Dann and Brunskill(2015)]{dann2015sample}
Christoph Dann and Emma Brunskill.
\newblock Sample complexity of episodic fixed-horizon reinforcement learning.
\newblock In \emph{Advances in Neural Information Processing Systems}, pages
  2818--2826, 2015.

\bibitem[Devlin and Kudenko(2012)]{devlin2012dynamic}
Sam~Michael Devlin and Daniel Kudenko.
\newblock Dynamic potential-based reward shaping.
\newblock In \emph{Proceedings of the 11th International Conference on
  Autonomous Agents and Multiagent Systems}, pages 433--440. IFAAMAS, 2012.

\bibitem[Efroni et~al.(2018)Efroni, Dalal, Scherrer, and
  Mannor]{efroni2018beyond}
Yonathan Efroni, Gal Dalal, Bruno Scherrer, and Shie Mannor.
\newblock Beyond the one-step greedy approach in reinforcement learning.
\newblock In \emph{International Conference on Machine Learning}, pages
  1386--1395, 2018.

\bibitem[Fujimoto et~al.(2018)Fujimoto, Hoof, and
  Meger]{fujimoto2018addressing}
Scott Fujimoto, Herke Hoof, and David Meger.
\newblock Addressing function approximation error in actor-critic methods.
\newblock In \emph{International Conference on Machine Learning}, pages
  1582--1591, 2018.

\bibitem[Haarnoja et~al.(2018)Haarnoja, Zhou, Abbeel, and
  Levine]{haarnoja2018soft}
Tuomas Haarnoja, Aurick Zhou, Pieter Abbeel, and Sergey Levine.
\newblock Soft actor-critic: Off-policy maximum entropy deep reinforcement
  learning with a stochastic actor.
\newblock \emph{arXiv preprint arXiv:1801.01290}, 2018.

\bibitem[Hessel et~al.(2017)Hessel, Modayil, Van~Hasselt, Schaul, Ostrovski,
  Dabney, Horgan, Piot, Azar, and Silver]{hessel2017rainbow}
Matteo Hessel, Joseph Modayil, Hado Van~Hasselt, Tom Schaul, Georg Ostrovski,
  Will Dabney, Dan Horgan, Bilal Piot, Mohammad Azar, and David Silver.
\newblock Rainbow: Combining improvements in deep reinforcement learning.
\newblock \emph{arXiv preprint arXiv:1710.02298}, 2017.

\bibitem[Kapturowski et~al.(2018)Kapturowski, Ostrovski, Quan, Munos, and
  Dabney]{kapturowski2018recurrent}
Steven Kapturowski, Georg Ostrovski, John Quan, Remi Munos, and Will Dabney.
\newblock Recurrent experience replay in distributed reinforcement learning.
\newblock 2018.

\bibitem[Kearns and Singh(1998)]{kearns1998near}
Michael~J Kearns and Satinder~P Singh.
\newblock Near-optimal reinforcement learning in polynominal time.
\newblock In \emph{Proceedings of the Fifteenth International Conference on
  Machine Learning}, pages 260--268. Morgan Kaufmann Publishers Inc., 1998.

\bibitem[Lillicrap et~al.(2015)Lillicrap, Hunt, Pritzel, Heess, Erez, Tassa,
  Silver, and Wierstra]{lillicrap2015continuous}
Timothy~P Lillicrap, Jonathan~J Hunt, Alexander Pritzel, Nicolas Heess, Tom
  Erez, Yuval Tassa, David Silver, and Daan Wierstra.
\newblock Continuous control with deep reinforcement learning.
\newblock \emph{arXiv preprint arXiv:1509.02971}, 2015.

\bibitem[Mania et~al.(2018)Mania, Guy, and Recht]{mania2018simple}
Horia Mania, Aurelia Guy, and Benjamin Recht.
\newblock Simple random search provides a competitive approach to reinforcement
  learning.
\newblock \emph{arXiv preprint arXiv:1803.07055}, 2018.

\bibitem[Mnih et~al.(2015)Mnih, Kavukcuoglu, Silver, Rusu, Veness, Bellemare,
  Graves, Riedmiller, Fidjeland, Ostrovski, et~al.]{mnih2015human}
Volodymyr Mnih, Koray Kavukcuoglu, David Silver, Andrei~A Rusu, Joel Veness,
  Marc~G Bellemare, Alex Graves, Martin Riedmiller, Andreas~K Fidjeland, Georg
  Ostrovski, et~al.
\newblock Human-level control through deep reinforcement learning.
\newblock \emph{Nature}, 518\penalty0 (7540):\penalty0 529, 2015.

\bibitem[Ng et~al.(1999)Ng, Harada, and Russell]{ng1999policy}
Andrew~Y Ng, Daishi Harada, and Stuart Russell.
\newblock Policy invariance under reward transformations: Theory and
  application to reward shaping.
\newblock In \emph{ICML}, volume~99, pages 278--287, 1999.

\bibitem[Ng et~al.(2000)Ng, Russell, et~al.]{ng2000algorithms}
Andrew~Y Ng, Stuart~J Russell, et~al.
\newblock Algorithms for inverse reinforcement learning.
\newblock In \emph{Icml}, volume~1, page~2, 2000.

\bibitem[Pathak et~al.(2017)Pathak, Agrawal, Efros, and
  Darrell]{pathak2017curiosity}
Deepak Pathak, Pulkit Agrawal, Alexei~A Efros, and Trevor Darrell.
\newblock Curiosity-driven exploration by self-supervised prediction.
\newblock In \emph{International Conference on Machine Learning}, pages
  2778--2787, 2017.

\bibitem[Puterman(1994)]{puterman1994markov}
Martin~L Puterman.
\newblock \emph{Markov decision processes: discrete stochastic dynamic
  programming}.
\newblock John Wiley \& Sons, 1994.

\bibitem[Salimans et~al.(2017)Salimans, Ho, Chen, Sidor, and
  Sutskever]{salimans2017evolution}
Tim Salimans, Jonathan Ho, Xi~Chen, Szymon Sidor, and Ilya Sutskever.
\newblock Evolution strategies as a scalable alternative to reinforcement
  learning.
\newblock \emph{arXiv preprint arXiv:1703.03864}, 2017.

\bibitem[Schulman et~al.(2017)Schulman, Wolski, Dhariwal, Radford, and
  Klimov]{schulman2017proximal}
John Schulman, Filip Wolski, Prafulla Dhariwal, Alec Radford, and Oleg Klimov.
\newblock Proximal policy optimization algorithms.
\newblock \emph{arXiv preprint arXiv:1707.06347}, 2017.

\bibitem[Silver et~al.(2014)Silver, Lever, Heess, Degris, Wierstra, and
  Riedmiller]{silver2014deterministic}
David Silver, Guy Lever, Nicolas Heess, Thomas Degris, Daan Wierstra, and
  Martin Riedmiller.
\newblock Deterministic policy gradient algorithms.
\newblock In \emph{International Conference on Machine Learning}, pages
  387--395, 2014.

\bibitem[Soudry et~al.(2018)Soudry, Hoffer, Nacson, Gunasekar, and
  Srebro]{soudry2018implicit}
Daniel Soudry, Elad Hoffer, Mor~Shpigel Nacson, Suriya Gunasekar, and Nathan
  Srebro.
\newblock The implicit bias of gradient descent on separable data.
\newblock \emph{The Journal of Machine Learning Research}, 19\penalty0
  (1):\penalty0 2822--2878, 2018.

\bibitem[Sutskever(2013)]{sutskever2013training}
Ilya Sutskever.
\newblock \emph{Training recurrent neural networks}.
\newblock University of Toronto Toronto, Ontario, Canada, 2013.

\bibitem[Syed and Schapire(2008)]{syed2008game}
Umar Syed and Robert~E Schapire.
\newblock A game-theoretic approach to apprenticeship learning.
\newblock In \emph{Advances in neural information processing systems}, pages
  1449--1456, 2008.

\bibitem[Tessler et~al.(2017)Tessler, Givony, Zahavy, Mankowitz, and
  Mannor]{tessler2017deep}
Chen Tessler, Shahar Givony, Tom Zahavy, Daniel~J Mankowitz, and Shie Mannor.
\newblock A deep hierarchical approach to lifelong learning in minecraft.
\newblock In \emph{AAAI}, volume~3, page~6, 2017.

\bibitem[Todorov et~al.(2012)Todorov, Erez, and Tassa]{todorov2012mujoco}
Emanuel Todorov, Tom Erez, and Yuval Tassa.
\newblock Mujoco: A physics engine for model-based control.
\newblock In \emph{Intelligent Robots and Systems (IROS), 2012 IEEE/RSJ
  International Conference on}, pages 5026--5033. IEEE, 2012.

\bibitem[Wirth et~al.(2017)Wirth, Akrour, Neumann, and
  F{\"u}rnkranz]{wirth2017survey}
Christian Wirth, Riad Akrour, Gerhard Neumann, and Johannes F{\"u}rnkranz.
\newblock A survey of preference-based reinforcement learning methods.
\newblock \emph{The Journal of Machine Learning Research}, 18\penalty0
  (1):\penalty0 4945--4990, 2017.

\bibitem[Xu et~al.(2018)Xu, van Hasselt, and Silver]{xu2018meta}
Zhongwen Xu, Hado~P van Hasselt, and David Silver.
\newblock Meta-gradient reinforcement learning.
\newblock In \emph{Advances in neural information processing systems}, pages
  2396--2407, 2018.

\bibitem[Zheng et~al.(2018)Zheng, Oh, and Singh]{zheng2018learning}
Zeyu Zheng, Junhyuk Oh, and Satinder Singh.
\newblock On learning intrinsic rewards for policy gradient methods.
\newblock In \emph{Advances in Neural Information Processing Systems}, pages
  4644--4654, 2018.

\end{thebibliography}

\appendix

\section{Proofs}

\begin{theorem}
    For any $\gamma_1, \gamma_2 \in [0,1]$ there exists $\tilde r$ such that
    \begin{align*}
        \tilde \pi^*_{\gamma_1} &= \argmax \mathbb{E}^\pi_{s \sim \mu} [\sum_{t=0}^T \gamma_1^t \tilde r_t | s_0 = s] \\
        &= \argmax \mathbb{E}^\pi_{s \sim \mu} [\sum_{t=0}^T \gamma_2 r_t | s_0 = s] = \pi^*_{\gamma_2} \enspace .
    \end{align*}
    Where $r$ is the original reward and $T < \infty$ is the horizon.
\end{theorem}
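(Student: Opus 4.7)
The plan is to exploit the finite, known horizon $T$ by working in the time-augmented state space $\mathcal{S} \times \{0,1,\ldots,T\}$, so that the surrogate reward $\tilde r$ is allowed to depend on the current time step (which Remark \ref{remark: non-stationary} already indicates is necessary). In that enlarged model, I would construct $\tilde r$ explicitly rather than argue abstractly, because the finite horizon lets us match the two discounted returns on a trajectory-by-trajectory basis.

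Concretely, for the main case $\gamma_1 \in (0,1]$, I propose to define
\begin{equation*}
\tilde r(s,t) \;=\; \Bigl(\tfrac{\gamma_2}{\gamma_1}\Bigr)^{t} r(s).
\end{equation*}
For any realized trajectory $(s_0,s_1,\ldots,s_T)$ this gives the pointwise identity $\sum_{t=0}^T \gamma_1^t \tilde r(s_t,t) = \sum_{t=0}^T \gamma_2^t r(s_t)$. Taking expectations over trajectories under an arbitrary policy $\pi$ and initial distribution $\mu$ yields equality of the two value functions, $\mathbb{E}^\pi_{s\sim\mu}[\sum_{t=0}^T \gamma_1^t \tilde r(s_t,t)\mid s_0=s] = \mathbb{E}^\pi_{s\sim\mu}[\sum_{t=0}^T \gamma_2^t r(s_t)\mid s_0=s]$, for every $\pi$ and every $s$. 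Since the objectives agree as functions of $\pi$, their argmax sets coincide, which is exactly the conclusion.

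For the boundary case $\gamma_1 = 0$ one must be a bit careful: the $\gamma_1$-discounted return collapses to $\mathbb{E}[\tilde r(s_0,0)]$, which is policy-independent, so every policy lies in the argmax. Then the containment form of the statement (reflected by the $\in,\ni$ notation in the body version, Theorem \ref{lemma: existence}) is satisfied trivially by, e.g., $\tilde r\equiv 0$, since $\pi^*_{\gamma_2}$ is among those policies; the stronger equality of argmax sets need not hold, which is exactly what Proposition \ref{thm: gamma zero doesnt work} formalizes.

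The main obstacle is essentially conceptual rather than computational: one has to recognize that time-dependence of $\tilde r$ is unavoidable (the stationary version would require $r(s) = (\gamma_2/\gamma_1)^{t}\tilde r(s)$ simultaneously for all $t$ at which $s$ is reachable, which is impossible unless $\gamma_1=\gamma_2$ or $r\equiv 0$), and to be careful in treating $\gamma_1=0$, where no rescaling trick can compensate for the loss of all but the initial reward. Once these two points are clear, the construction itself is a one-line rescaling and the verification is an immediate pointwise equality.
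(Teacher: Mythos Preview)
Your argument is correct. The rescaling $\tilde r(s,t) = (\gamma_2/\gamma_1)^t r(s)$ gives a trajectory-wise identity of the two objectives for every $\pi$ when $\gamma_1 \in (0,1]$, and you correctly isolate the degenerate behavior at $\gamma_1 = 0$.

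However, this is \emph{not} the construction the paper uses for this theorem. The paper's proof of Theorem~\ref{lemma: existence} defines, following the $\kappa$-PI shaping of Efroni et al., the time-indexed surrogate
\[
\tilde r^{\,i} \;=\; r \;+\; (\gamma_1-\gamma_2)\,\mathcal{P}^{\pi^*_{\gamma_1}} v^{\,i+1,\pi^*_{\gamma_1}},
\]
i.e., the original reward plus a constant (policy-independent) additive term built from the optimal value function of the $\gamma_1$ problem. It then verifies, via one application of the Bellman operator, that this additive shift leaves the optimal Bellman fixed point unchanged, so $\pi^*_{\gamma_1}$ is still optimal under the $\gamma_2$-discounted surrogate return. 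Your rescaling is instead exactly the device the paper deploys for Theorem~\ref{thm: reward ordering exists} (there with $\gamma_2 = 1$).

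What each approach buys: your construction is more elementary and strictly stronger, since it yields equality of the two objectives as functions of $\pi$ (hence equality of the full argmax sets, not merely a shared optimizer), and it requires no knowledge of $\mathcal{P}$ or of $v^{\,*}$. The paper's construction, by contrast, depends on the optimal value function and the transition kernel; its merit is that the resulting $\tilde r$ stays bounded uniformly in $t$ (whereas $(\gamma_2/\gamma_1)^t$ can blow up for $\gamma_1<\gamma_2$), and it ties the result to the potential-based/$\kappa$-PI shaping literature. Both constructions are non-stationary and both break down in the same way at $\gamma_1=0$, consistent with Proposition~\ref{thm: gamma zero doesnt work}.
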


\begin{proof}
We define the optimal value function for the finite-horizon total reward by $v^{i,*}$, the optimal policy $\pi^{i,*} (s) = \argmax_{a \in \mathcal{A}} \sum_{s' \in \mathcal{S}} \mathcal{P} (s' | s, a) v^{i+1,*} (s')$ and the value of the surrogate objective $\tilde r$ with discount $\gamma$ by $\tilde v^{i, \pi}_{\gamma}$. Thus, extending the result shown in \citet[$\kappa$-PI]{efroni2018beyond}, we may define the surrogate reward as
\begin{equation}\label{eqn: existance}
    0 \leq i < \tau : \tilde r^i = r + (1 - \gamma) \mathcal{P}^{\pi^*} v^{i+1,*} \enspace .
\end{equation}
Applying the Bellman operator, we get
\begin{align*}
    &\tilde v^{i, \pi}_{\gamma_2} (s) = \tilde r^i (s) + \gamma_2 \sum_{s' \in \mathcal{S}} \mathcal{P}(s'| s, \pi(s)) v^{i+1,\pi} (s') \\
    &= r(s) + \left(\gamma_1 - \gamma_2\right) \sum_{s' \in \mathcal{S}} P^{\pi^*_{\gamma_1}} (s' | s, \pi^*_{\gamma_1} (s)) v^{i+1,\pi^*_{\gamma_1}} (s') \\
    &\enspace\enspace+ \gamma_2 \sum_{s' \in \mathcal{S}} \mathcal{P}(s'| s, \pi(s)) v^{i+1,\pi} (s') \enspace .
\end{align*}
As we only added a \emph{fixed} term to the reward, trivially, under the optimal Bellman operator $\mathcal{T} v^\pi$ we have that $\mathcal{T} v^\pi \geq v^\pi$. Also, this has a single fixed-point $\pi^*$:

\begin{align*}
    \tilde v^{i, \pi^*_{\gamma_1}}_{\gamma_2} \!&=\! r (s)\! +\! \left(\gamma_1 \!-\! \gamma_2\right)\!\! \sum_{s' \in \mathcal{S}} P^{\pi^*_{\gamma_1}} (s' | s, \pi^*_{\gamma_1} (s)) v^{i+1,\pi^*_{\gamma_1}} (s') \\
    &\enspace\enspace+ \gamma_2 \sum_{s' \in \mathcal{S}} \mathcal{P}(s'| s, \pi^*_{\gamma_1} (s)) v^{i+1,\pi^*_{\gamma_1}} (s') \numberthis\\
    &= r (s) \!+\! \gamma_1 \!\!\sum_{s' \in \mathcal{S}} \mathcal{P}(s'| s, \pi^*_{\gamma_1} (s)) v^{i+1,\pi^*_{\gamma_1}} (s') = v^{i,\pi^*}
\end{align*}

the above holds for any $\gamma_1, \gamma_2 \in [0, 1]$.
\end{proof}

\begin{theorem}
    For any $\gamma \in (0, 1]$, there exists a mapping $\tilde r (s,t)$ such that for any two trajectories $\tau_i, \tau_j$ and ${\mathbb{E}_{\tau^i} [\sum_{t=0}^T r_t] \geq \mathbb{E}_{\tau^j} [\sum_{t=0}^T r_t]}$ then ${\mathbb{E}_{\tau^i} [\sum_{t=0}^T \gamma^t \tilde r_t] \geq \mathbb{E}_{\tau^j} [\sum_{t=0}^T \gamma^t \tilde r_t]}$.
\end{theorem}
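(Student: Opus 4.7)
The plan is to exhibit an explicit construction of $\tilde r(s,t)$ that makes the $\gamma$-discounted cumulative reward along any trajectory literally coincide with the undiscounted cumulative reward. Since the hypothesis allows $\gamma \in (0,1]$, division by $\gamma^t$ is well defined, so I would set
\begin{equation*}
    \tilde r(s, t) \;=\; \gamma^{-t}\, r(s) \enspace .
\end{equation*}
This is the time-dependent surrogate reward that Remark~1 already anticipates; the construction is essentially forced once one realizes that $\gamma = 0$ is excluded.

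Next, I would verify the reduction. For any realized trajectory $\tau = (s_0, s_1, \ldots, s_T)$,
\begin{equation*}
    \sum_{t=0}^T \gamma^t \tilde r(s_t, t) \;=\; \sum_{t=0}^T \gamma^t \cdot \gamma^{-t} r(s_t) \;=\; \sum_{t=0}^T r(s_t) \enspace .
\end{equation*}
Taking expectations over $\tau$ preserves this pointwise identity, so
\begin{equation*}
    \mathbb{E}_{\tau}\Bigl[\sum_{t=0}^T \gamma^t \tilde r_t\Bigr] \;=\; \mathbb{E}_{\tau}\Bigl[\sum_{t=0}^T r_t\Bigr]
\end{equation*}
for every distribution over trajectories. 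Consequently, for any pair $\tau_i, \tau_j$ the inequality $\mathbb{E}_{\tau_i}[\sum_t r_t] \geq \mathbb{E}_{\tau_j}[\sum_t r_t]$ is equivalent to $\mathbb{E}_{\tau_i}[\sum_t \gamma^t \tilde r_t] \geq \mathbb{E}_{\tau_j}[\sum_t \gamma^t \tilde r_t]$, which is strictly stronger than what the theorem requires.

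There is no genuinely hard step here — the whole argument is a one-line change of variables in time — but the substantive content is the observation that the map $t \mapsto \gamma^{-t}$ is the canonical fix to the mismatch between the evaluation horizon and the training discount, and that this map becomes singular precisely at $\gamma = 0$. This last point is exactly what makes Proposition~1 (the $\gamma = 0$ impossibility) compatible with this positive result, and I would emphasize it at the end of the proof as the bridge to the subsequent proposition.
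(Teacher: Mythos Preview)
Your proposal is correct and matches the paper's own proof essentially line for line: the paper also constructs $\tilde r(s,t) = \gamma^{-t}\,\mathbb{E}[r(s)]$ and observes that the $\gamma^t$ and $\gamma^{-t}$ factors cancel so the discounted surrogate return equals the undiscounted original return. Your added remark tying the singularity at $\gamma=0$ to Proposition~\ref{thm: gamma zero doesnt work} is a nice touch that the paper leaves implicit.
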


\begin{proof}
    While the problem is ill-defined, e.g., there exists many reward signals $\tilde r$ which satisfy this condition, finding one such reward is sufficient for existence.
    
    Consider the reward $\tilde r(s,t) = \frac{1}{\gamma^t} \mathbb{E}[r(s)]$. For any two trajectories $\tau_i, \tau_j$ the following holds:
    \begin{equation}
        \sum_{\tau_i} \gamma^t \tilde r(s,t) = \sum_{\tau_i} \gamma^t \frac{1}{\gamma^t} \mathbb{E}[r(s)] = \sum_{\tau_i} \mathbb{E}[r(s)] \, ,
    \end{equation}
    hence, if ${\mathbb{E}_{\tau^i} [\sum_{t=0}^T r_t] \geq \mathbb{E}_{\tau^j} [\sum_{t=0}^T r_t]}$ then ${\sum_{\tau_i} \gamma^t \tilde r_t \geq \sum_{\tau_j} \gamma^t \tilde r_t}$.
\end{proof}

\begin{prop}
    \cref{thm: reward ordering exists} does not hold for $\gamma = 0$.
\end{prop}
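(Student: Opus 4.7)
The plan is to exploit a degeneracy of the discounted sum at $\gamma = 0$. First I would observe that, under the standard convention $0^0 = 1$, the surrogate return
\[
\sum_{t=0}^T \gamma^t \tilde r(s_t, t)\bigg|_{\gamma=0} \;=\; \tilde r(s_0, 0),
\]
since every term with $t \geq 1$ vanishes. Consequently, no matter how $\tilde r$ is chosen, the surrogate value assigned to a trajectory depends only on its initial state $s_0$, and any two trajectories that share an initial state must be assigned identical surrogate returns.

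Next I would produce a two-trajectory counterexample against the ordering property. Take a deterministic one-step MDP with start state $s_0$ and two actions leading to terminal states $s_1, s_2$ with $r(s_1) = 1$ and $r(s_2) = 0$, giving trajectories $\tau_1 = (s_0, s_1)$ and $\tau_2 = (s_0, s_2)$ with strictly ordered total rewards $\mathbb{E}_{\tau_1}[\sum r_t] > \mathbb{E}_{\tau_2}[\sum r_t]$. By the collapse above, for every choice of $\tilde r$,
\[
\mathbb{E}_{\tau_1}\!\left[\sum_{t=0}^T 0^t \tilde r_t\right] \;=\; \mathbb{E}_{\tau_2}\!\left[\sum_{t=0}^T 0^t \tilde r_t\right] \;=\; \tilde r(s_0, 0),
\]
so the surrogate ordering is unable to separate $\tau_1$ from $\tau_2$. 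I would also remark in passing that the explicit construction $\tilde r(s,t) = \mathbb{E}[r(s)]/\gamma^t$ used to prove \cref{thm: reward ordering exists} is simply undefined at $\gamma = 0$ for $t \geq 1$, so the original argument does not specialize even formally.

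The main obstacle is really one of phrasing rather than of mathematics: read literally, the $\geq$-implication of \cref{thm: reward ordering exists} is trivially satisfied at $\gamma = 0$ by the zero mapping $\tilde r \equiv 0$, so the proposition only has content once we require preservation of \emph{strict} orderings between trajectories — which is precisely what the subsequent reward-learning procedure needs in order to distinguish better trajectories from worse ones. I would therefore make this interpretation explicit at the top of the proof, and then the collapse $\sum 0^t \tilde r_t = \tilde r(s_0, 0)$ together with the two-trajectory example immediately delivers the claim: the common initial state forces equal surrogate values, while the total rewards are strictly ordered.
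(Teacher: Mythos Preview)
Your argument is correct and rests on the same mechanism as the paper's: at $\gamma=0$ the discounted surrogate return collapses to the first term, so trajectories sharing that first state are indistinguishable regardless of $\tilde r$. The paper instead builds three trajectories $\tau_1,\tau_3,\tau_2$ with strict total-reward ordering $\tau_1>\tau_3>\tau_2$, where $\tau_1$ and $\tau_2$ share their first rewarded state $s_B$; it then argues that $U(\tau_1)=U(\tau_2)$ rules out the required strict chain $U(\tau_1)>U(\tau_3)>U(\tau_2)$.

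Your two-trajectory version is a genuine simplification: the intermediate trajectory $\tau_3$ in the paper adds no logical force, since the contradiction already follows from $U(\tau_1)=U(\tau_2)$ alone. You are also more careful on a point the paper glosses over --- namely that the literal $\geq$-statement of \cref{thm: reward ordering exists} is vacuously satisfied at $\gamma=0$ by a constant $\tilde r$, and that the intended content is preservation of \emph{strict} rankings (which is what the downstream learning procedure actually requires). The paper's proof implicitly assumes this strict reading as well, so your explicit caveat strengthens rather than weakens the argument.
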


\begin{proof}
    Consider 3 trajectories:
    \begin{align*}
        &\tau_1 : s_A \rightarrow s_B \rightarrow s_C \\
        &\tau_2 : s_A \rightarrow s_B \rightarrow s_D \\
        \tau_3 : s_A \rightarrow s_E
    \end{align*}
    where the rewards are
    \begin{align*}
        r(s_B) = 0, r(s_C) = 2, r(s_D) = -1, r(s_E) = 1
    \end{align*}
    thus the following ordering holds $\tau_1 > \tau_3 > \tau_2 $.
    
    However, by setting $\gamma = 0$ the utility of each trajectory in the $\gamma$-discounted setting is defined as
    \begin{align*}
        &U(\tau_1) = \tilde r(s_B) + \gamma \tilde r(s_C) \stackrel{\gamma=0}{=} \tilde r(s_B)\\
        &U(\tau_2) = \tilde r(s_B) + \gamma \tilde r(s_D) \stackrel{\gamma=0}{=} \tilde r(s_B)\\
        U(\tau_3) = \tilde r(s_E)
    \end{align*}
    thus, $U(\tau_1) = U(\tau_2)$ and the ordering $U(\tau_1) > U(\tau_3) > U(\tau_2) $ can not be maintained.
\end{proof}

\begin{lemma}
    For any MDP $(\mathcal{S}, \mathcal{A}, \mathcal{R}, \mathcal{P}, T)$, the surrogate reward $\tilde r$ for the $\gamma$-discounted problem is not unique.
\end{lemma}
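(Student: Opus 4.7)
The plan is to establish non-uniqueness by exhibiting multiple reward functions that each satisfy the surrogate property. The quickest proof reuses constructions already appearing earlier in the paper: \cref{lemma: existence} produces the surrogate $\tilde r^i = r + (1-\gamma)\mathcal{P}^{\pi^*} v^{i+1,*}$, whereas \cref{thm: reward ordering exists} produces the very different surrogate $\tilde r(s,t) = \gamma^{-t} \mathbb{E}[r(s)]$. For any non-degenerate MDP these two functions disagree pointwise, so two valid surrogates already coexist and uniqueness fails.

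To obtain the stronger conclusion that an infinite family of valid surrogates exists, I would invoke two standard invariances of the $\gamma$-discounted objective. The first is positive scaling: if $\tilde r$ is a valid surrogate, then so is $c \tilde r$ for every $c > 0$, since $\argmax_\pi \mathbb{E}^\pi[\sum_{t=0}^T \gamma^t c \tilde r_t] = \argmax_\pi \mathbb{E}^\pi[\sum_{t=0}^T \gamma^t \tilde r_t]$. This alone yields a one-parameter family of distinct surrogate rewards whenever $\tilde r \not\equiv 0$. The second is potential-based shaping \citep{ng1999policy} adapted to the finite-horizon non-stationary case: for any bounded $\Phi : \mathcal{S} \times \{0,\dots,T+1\} \to \mathbb{R}$ with $\Phi(\cdot, T+1) \equiv 0$, one forms a shaped reward by adding the term $\gamma \mathbb{E}_{s'}[\Phi(s', t+1)] - \Phi(s, t)$, which preserves the optimal policy. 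Combined with \cref{remark: non-stationary}, which already licenses time-indexed reward functions, this produces an infinite-dimensional family of surrogates parameterized by $\Phi$.

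The only step requiring care is the telescoping identity underlying the shaping argument in this non-stationary discounted setting. Expanding along a trajectory yields $\sum_{t=0}^T \gamma^t (\gamma \Phi(s_{t+1}, t+1) - \Phi(s_t, t)) = \gamma^{T+1} \Phi(s_{T+1}, T+1) - \Phi(s_0, 0) = -\Phi(s_0, 0)$, so the shaped discounted return differs from the original by a constant depending only on the initial state. This shift cannot affect $\argmax_\pi$, confirming that each valid $\tilde r$ generates a whole coset of valid surrogates. I anticipate no substantive obstacle here --- the lemma is essentially a corollary of the earlier existence results, with scaling and shaping upgrading ``at least two'' to ``infinitely many.''
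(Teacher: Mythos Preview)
Your proposal is correct, and one of your three arguments---positive scaling---is exactly the paper's proof. The paper establishes non-uniqueness in a single stroke by observing that $v^\pi_{\alpha\mathcal{R}} = \alpha\, v^\pi_{\mathcal{R}}$ for any $\alpha > 0$, so every positive rescaling of a valid surrogate is again a valid surrogate; it does not invoke the two explicit constructions from \cref{lemma: existence} and \cref{thm: reward ordering exists}, nor potential-based shaping.

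Your additional arguments are sound but surplus to requirements. Exhibiting the two earlier constructions is a nice observation, though the qualifier ``non-degenerate'' is doing some unstated work (you need $\gamma<1$ and nonzero future value for the two to differ at $t=0$). The shaping argument is also fine once you commit to the realized-next-state form $\gamma\Phi(s_{t+1},t{+}1)-\Phi(s_t,t)$ rather than the expectation you first wrote; your telescoping line already uses the correct version. What the extra machinery buys you is a richer picture of the equivalence class of surrogates (an infinite-dimensional coset under shaping, not merely a ray under scaling), which is informative but beyond what the lemma asserts.
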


\begin{proof}
This can be shown trivially, by observing that any multiplication of the rewards $\mathcal{R}$ by a positive (non-zero) scalar, is identical. Meaning that, $\forall \pi \in \Pi, s \in \mathcal{S}, \alpha \in \mathbb{R}^+$:
\begin{align*}
    v^\pi_{\alpha \mathcal{R}} (s) &= \mathbb{E}^\pi [\sum_{t=0}^\infty \gamma^t \alpha r_t | s_0 = 0] \\
    &= \alpha \mathbb{E}^\pi [\sum_{t=0}^\infty \gamma^t r_t | s_0 = 0] = \alpha v^\pi_{\mathcal{R}} (s) \enspace .
\end{align*}
\end{proof}

\begin{prop}
    If $\tilde r$ enables recovering the optimal policy, e.g., \cref{eqn: existance}, and $\max_{s,a} |\mathcal{\hat P}(s,a) - \mathcal{P}(s,a)| \leq \epsilon_P$, where $\hat P$ is the empirical probability transition estimates, then
    \begin{equation*}
        \forall s \in \mu : || v^\pi_{\hat P} - v^\pi_P ||_\infty  \leq \frac{\gamma (1 - \gamma^T) \tilde R_\text{max}}{2 (1-\gamma)^2} \epsilon_P \enspace ,
    \end{equation*}
    where $\tilde R_\text{max} = \max_{s \in \mathcal{S}} \tilde r(s)$.
\end{prop}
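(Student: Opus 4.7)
The plan is to prove a finite-horizon analogue of the Kearns--Singh Simulation Lemma by induction on the remaining horizon. I will parametrize the value functions by the number of remaining steps, writing $v_t^\pi(s)$ for the $\gamma$-discounted value under $P$ with $t$ steps to go, and $\hat v_t^\pi(s)$ for the analogous quantity under $\hat P$. The Bellman recursions are
\begin{align*}
v_t^\pi(s) &= \tilde r(s) + \gamma \sum_{s'} P(s' \mid s,\pi(s))\, v_{t-1}^\pi(s'), \\
\hat v_t^\pi(s) &= \tilde r(s) + \gamma \sum_{s'} \hat P(s' \mid s,\pi(s))\, \hat v_{t-1}^\pi(s'),
\end{align*}
with $v_0^\pi = \hat v_0^\pi = \tilde r$ (or zero, depending on convention --- either works and only shifts the base case).

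First I would subtract the two recursions and perform the standard add/subtract trick, inserting $\gamma \sum_{s'} \hat P(s' \mid s,\pi(s)) v_{t-1}^\pi(s')$ to split the difference into a ``propagation'' term $\gamma \sum_{s'} \hat P(s' \mid s,\pi(s))[\hat v_{t-1}^\pi(s') - v_{t-1}^\pi(s')]$ and a ``model error'' term $\gamma \sum_{s'} [\hat P(s' \mid s,\pi(s)) - P(s' \mid s,\pi(s))]\, v_{t-1}^\pi(s')$. The propagation term is bounded by $\gamma \|\hat v_{t-1}^\pi - v_{t-1}^\pi\|_\infty$ since $\hat P^\pi$ is a row-stochastic matrix. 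For the model error term I would use the fact that $\hat P(\cdot \mid s,a) - P(\cdot \mid s,a)$ sums to zero, so we may replace $v_{t-1}^\pi$ by $v_{t-1}^\pi - c\mathbf{1}$ for any constant $c$; choosing $c = \tfrac{1}{2}(\max v_{t-1}^\pi + \min v_{t-1}^\pi)$ yields the bound $\tfrac{1}{2}(\max v_{t-1}^\pi - \min v_{t-1}^\pi)\cdot \epsilon_P$. Combined with the standard finite-horizon value bound $\|v_{t-1}^\pi\|_\infty \leq \tilde R_{\max}(1-\gamma^{t-1})/(1-\gamma)$, this delivers
\[
\|\hat v_t^\pi - v_t^\pi\|_\infty \leq \gamma \|\hat v_{t-1}^\pi - v_{t-1}^\pi\|_\infty + \frac{\gamma(1-\gamma^{t-1}) \tilde R_{\max}}{2(1-\gamma)}\epsilon_P.
\]

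Next I would unroll this recursion with base $\|\hat v_0^\pi - v_0^\pi\|_\infty = 0$, giving
\[
\|\hat v_T^\pi - v_T^\pi\|_\infty \leq \frac{\tilde R_{\max}\epsilon_P}{2(1-\gamma)} \sum_{k=1}^T \gamma^{T-k+1}(1-\gamma^{k-1}).
\]
The geometric sum evaluates to $\gamma(1-\gamma^T)/(1-\gamma) - T\gamma^T$; dropping the non-positive second term yields exactly the claimed bound $\frac{\gamma(1-\gamma^T)\tilde R_{\max}}{2(1-\gamma)^2}\epsilon_P$.

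The main obstacle is the factor $1/2$: without the centering trick one naively obtains a bound that is twice as large. The cleanest way to justify centering is to observe that $\hat P - P$ annihilates constant vectors on the right, so $(\hat P - P)v = (\hat P - P)(v - c\mathbf{1})$ for any scalar $c$, after which an $\ell_\infty$/$\ell_1$ Hölder bound finishes the step. A secondary subtlety is whether $\epsilon_P$ denotes an entrywise or $\ell_1$ deviation; since the next proposition (Proposition on $\mathcal{S}_L$) explicitly writes $\|\cdot\|_1$, I would interpret $\epsilon_P$ consistently as the row-wise $\ell_1$ distance between $\hat P(\cdot \mid s,a)$ and $P(\cdot \mid s,a)$, and note this at the outset of the proof. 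The assumption that $\tilde r$ recovers the optimal policy is not actually used in the derivation --- it only motivates why one cares about the bound --- so I would remark that the result holds for any bounded surrogate reward.
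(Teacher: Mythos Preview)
Your proof is correct and follows essentially the same route as the paper: the Simulation-Lemma add/subtract decomposition, the centering trick $(\hat P - P)\mathbf{1}=0$ to obtain the factor $1/2$, and the finite-horizon value bound $\|v\|_\infty \leq \tilde R_{\max}(1-\gamma^T)/(1-\gamma)$. The only cosmetic difference is that the paper writes the resulting inequality in self-referential form $e \leq \gamma e + c$ and solves for $e$ directly, whereas you index by remaining horizon and unroll the recursion explicitly---which is marginally tighter before you discard the $-T\gamma^T$ term and arguably cleaner for a non-stationary finite-horizon setting; your closing remarks on the $\ell_1$ interpretation of $\epsilon_P$ and the irrelevance of the optimality assumption are also accurate.
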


\begin{proof}
The proof follows that of the Simulation Lemma, with a slight adaptation for the finite horizon discounted scenario with zero reward error.
$\forall s \in \mathcal{S}:$
\begin{align*}
    &\left|v_{\hat P}^\pi (s) - v_P^\pi (s)\right| = \gamma \left| r(s) + \sum_{s' \in \mathcal{S}} \hat P (s' | s, \pi) v_{\hat P}^\pi (s') - r(s) - \sum_{s' \in \mathcal{S}} P (s' | s, \pi) v_P^\pi (s') \right| \\
    &= \gamma \left| \sum_{s' \in \mathcal{S}} \left(\hat P (s' | s, \pi) v_{\hat P}^\pi (s') - P (s' | s, \pi) v_P^\pi (s') \right) \right| \\
    &= \gamma \left| \sum_{s' \in \mathcal{S}} \left(\hat P (s' | s, \pi) v_{\hat P}^\pi(s') - P (s' | s, \pi) v_{\hat P}^\pi(s') + P (s' | s, \pi) v_{\hat P}^\pi(s') - P (s' | s, \pi) v_P^\pi(s') \right) \right| \\
    &\leq \gamma \left| \sum_{s' \in \mathcal{S}} \left(\hat P (s' | s, \pi) v_{\hat P}^\pi(s') - P (s' | s, \pi) v_{\hat P}^\pi(s') \right) \right| + \gamma || v_{\hat P}^\pi - v_P^\pi ||_\infty \\
    &= \gamma \left| \sum_{s' \in \mathcal{S}} \left(\hat P (s' | s, \pi) - P (s' | s, \pi) \right) \left(v_{\hat P}^\pi(s') - \frac{\tilde R_\text{max}(1 - \gamma^T)}{2(1-\gamma)} \right) \right| \\
    &+ \gamma || v_{\hat P}^\pi - v_P^\pi ||_\infty \leq \gamma \epsilon_P \frac{\tilde R_\text{max}(1 - \gamma^T)}{2(1-\gamma)} + \gamma || v_{\hat P}^\pi - v_P^\pi ||_\infty \enspace ,
\end{align*}

where $\sum_{t=0}^T \gamma^t \tilde r_t \leq \tilde R_\text{max} (1 - \gamma^T) / (1-\gamma)$. Moreover, in order to recover the bounds for the original problem, plugging in Theorem \ref{lemma: existence} we retrieve the original value and thus $\sum_{t=0}^T \gamma^t \tilde r_t \leq T \cdot R_\text{max}$; in which the robustness is improved by a factor of $(1-\gamma)^{-1}$.
\end{proof}

As we are analyzing the finite-horizon objective, it is natural to consider a scenario where the uncertainty in the transitions $|\hat P(s' | s, a) - P(s' | s, a)|$ is concentrated at the end of the trajectories. This setting is motivated by practical observations -- the agent commonly starts in the same set of states, hence, most of the data it observes is concentrated in those regions leading to larger errors at advanced stages.

The following proposition presents how the bounds improve when the uncertainty is concentrated near the end of the trajectories, as a factor of both the discount $\gamma$ and the size of the uncertainty region defined by $\mathcal{S}_L$.

\begin{prop}
    We assume the MDP can be factorized, such that all states $s$ that can be reached within $T-L \leq t \leq T$ steps are unreachable for $t < T-L$, $\forall \pi \in \Pi$. We denote the ${T-L \leq t \leq T}$ reachable states by $\mathcal{S}_L$.
    
    If $\max_{s \in \mathcal{S}_L, a} ||\mathcal{\hat P}(s,a) - \mathcal{P}(s,a)||_1 \leq \epsilon_P$ and $\max_{s \in \mathcal{S}\SLASH\mathcal{S}_L, a} ||\mathcal{\hat P}(s,a) - \mathcal{P}(s,a)||_1 = 0$ then
    \begin{equation*}
        \forall s \in \mu : || v^\pi_{\hat P} - v^\pi_P ||_\infty  \leq \gamma^{T-L} \frac{\gamma (1 - \gamma^L) \tilde R_\text{max}}{2 (1-\gamma)^2} \epsilon_P \enspace .
    \end{equation*}
\end{prop}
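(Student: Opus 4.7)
The plan is to mimic the proof of the preceding proposition but to exploit the factorized structure of the MDP. The trajectory splits naturally into a \emph{clean prefix} of $T-L$ steps, during which $\hat P = P$ exactly so the one-step error is structurally zero, followed by a \emph{noisy suffix} of $L$ steps confined to $\mathcal{S}_L$, where the transition error is bounded by $\epsilon_P$. The $\gamma^{T-L}$ factor in the target inequality should arise by discounting the suffix error across the clean prefix, while the $(1-\gamma^L)$ factor should replace the $(1-\gamma^T)$ factor of the preceding proposition because the noisy suffix has effective horizon $L$ rather than $T$.

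The first step is to bound the error on $\mathcal{S}_L$. By the factorization assumption, once a trajectory enters $\mathcal{S}_L$ at time $T-L$ it remains there (any subsequently reached state is reached at time $\geq T-L+1$ and hence lies in $\mathcal{S}_L$ by definition), so the induced sub-problem on $\mathcal{S}_L$ is a finite-horizon MDP of length $L$ with transition error uniformly bounded by $\epsilon_P$. I would then repeat the unrolling argument of the previous proposition \emph{verbatim}, with the only change being that the reward-sum bound $\sum_{t=0}^{T} \gamma^t \tilde r_t \leq \tilde R_\text{max}(1-\gamma^T)/(1-\gamma)$ is replaced by its $L$-horizon analogue $\sum_{t=0}^{L} \gamma^t \tilde r_t \leq \tilde R_\text{max}(1-\gamma^L)/(1-\gamma)$. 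After rearranging the resulting geometric-contraction inequality one obtains
\begin{equation*}
    \max_{s \in \mathcal{S}_L}\ \left| v^\pi_{\hat P}(s) - v^\pi_P(s) \right| \leq \frac{\gamma(1-\gamma^L)\tilde R_\text{max}}{2(1-\gamma)^2}\,\epsilon_P .
\end{equation*}

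The second step is to propagate this bound backward through the clean prefix. For any state $s \in \mathcal{S}\setminus\mathcal{S}_L$ reached at time $t < T-L$, the exact match $\hat P(\cdot \mid s,a) = P(\cdot \mid s,a)$ together with identical rewards yields the error-free one-step recursion $|v^{\pi,t}_{\hat P}(s) - v^{\pi,t}_P(s)| \leq \gamma\,\|v^{\pi,t+1}_{\hat P} - v^{\pi,t+1}_P\|_\infty$ on the time-indexed value functions. Unrolling this recursion across exactly $T-L$ consecutive time slices---valid because the factorization guarantees that every state encountered before time $T-L$ lies in $\mathcal{S}\setminus\mathcal{S}_L$ and therefore contributes no transition error---accumulates the factor $\gamma^{T-L}$ in front of the error at the time-$(T-L)$ boundary, which is precisely the quantity bounded in step one. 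Combining the two bounds gives the stated inequality.

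The main obstacle I anticipate is the bookkeeping around non-stationarity, since \cref{remark: non-stationary} reminds us that both the surrogate reward and the optimal policy are time-indexed; care is needed to keep the time slice consistent when applying the clean recursion and the $L$-horizon Simulation Lemma, otherwise the $\gamma^{T-L}$ and $(1-\gamma^L)$ factors could be miscounted (e.g., ending up with $1-\gamma^{L+1}$ or a misplaced $\gamma^{T-L+1}$). A secondary point worth stating explicitly is that $\tilde R_\text{max}$ uniformly upper bounds $\tilde r$ on $\mathcal{S}_L$, which is immediate from $\mathcal{S}_L \subseteq \mathcal{S}$, so the reward-sum bound from the previous proposition transfers to the sub-MDP without modification.
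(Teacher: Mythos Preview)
Your proposal is correct and follows essentially the same two-step approach as the paper: first apply the previous proposition to the horizon-$L$ sub-MDP on $\mathcal{S}_L$ to obtain the $\frac{\gamma(1-\gamma^L)\tilde R_\text{max}}{2(1-\gamma)^2}\epsilon_P$ bound, then propagate this error through the error-free prefix of $T-L$ steps to pick up the $\gamma^{T-L}$ factor. Your write-up is in fact considerably more careful than the paper's own proof, which dispatches both steps in two terse sentences without the explicit time-indexed recursion or the closure argument for $\mathcal{S}_L$.
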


\begin{proof}
    Notice that this formulation defines a factorized MDP, in which we can analyze each portion independently.
    
    The sub-optimality of $|v^\pi_{\hat M} (s) - v^\pi_M (s) |$ for $s \in \mathcal{S}_L$ can be bounded similarly to Proposition \ref{lemma: simulation}
    \begin{equation*}
        \forall s \in \mathcal{S}_L: || v^\pi_{\hat P} - v^\pi_P ||_\infty  \leq \frac{\gamma (1 - \gamma^{L}) \tilde R_\text{max}}{2 (1-\gamma)^2} \epsilon_P \enspace ,
    \end{equation*}
    where the horizon is $L$ due to the factorization of the MDP.
    
    For $s \in \mathcal{S}\SLASH\mathcal{S}_L$, clearly as the estimation error is $0$ for states near the initial position, then the overall error in these states is bounded by $\gamma^{T-L} \frac{\gamma (1 - \gamma^{L}) \tilde R_\text{max}}{2 (1-\gamma)^2} \epsilon_P$ where $\gamma^{T-L}$ is due to the $T-L$ steps it takes to reach the uncertainty region.
\end{proof}

\section{Extended Results}
\subsection{Experimental Details}
We compare reward tweaking with the baseline TD3 model, adapted to the finite horizon setting. The results are presented graphically in \cref{fig: hopper pareto} and numerically in the supplementary material. Our implementations are based on the original code by \citet{fujimoto2018addressing}. In their work, as they considered stationary models, they converted the task into an infinite horizon setting, by considering ``termination signals due to timeout" as non-terminal. However, they still evaluate the policies on the total reward achieved over $1,000$ steps.

Although this works well in the MuJoCo control suite, it is not clear if this should work in the general case, and whether or not the algorithm designer has access to this knowledge (that termination occurred due to timeout). Hence, we opt for a \emph{finite-horizon non-stationary actor and critic scheme}. To achieve this, we concatenate the normalized remaining time $\tilde t = (T-t)/T$ to the state.

As we are interested in a finite-horizon task, the reward tweaking experiments are performed on this non-stationary version of TD3. In our experiments, $\tilde r$ and $\pi$ are learned in parallel (\cref{fig: reward tweaking}). The reward tweaker $\tilde r$, is represented using 4 fully-connected layers ($|s| \rightarrow 256 \rightarrow 256 \rightarrow 256 \rightarrow 1$) with ReLU activations. Training is performed on trajectories, which are sampled from the replay buffer, every $4$ time-steps. Each time, the loss is computed using $256$ trajectories, each up to a length of $100$ steps. Due to the structure of the environment, timeout can occur at any state. Hence, given a sampled sub-trajectory, we update the normalized remaining time of the sampled states, based on the length of the sub-trajectory, i.e., as if timeout occurred in the last sampled state. This enables to train $\tilde r$ using partial trajectories, while the model is required to extrapolate the rest. Our computing infrastructure consists of two machines with two GTX 1080 GPUs, each.

\begin{figure}[H]
    \centering
    \includegraphics[width=0.49\textwidth]{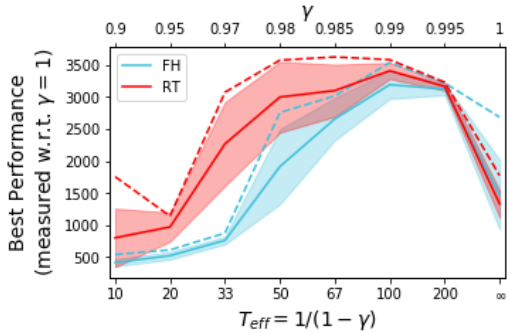}
    \caption{Hopper-v2 experiments, comparing the TD3 finite horizon baseline [FH] with our method, Reward Tweaking [RT]. The top X axis denotes the discount factor $\gamma$, and the bottom presents the effective planning horizon $T_{eff} = \frac{1}{1-\gamma}$. The dashed line shows the max performance over all seeds, the solid is the mean and the colored region is the 95\% confidence bound.}
\end{figure}

\begin{table}[H]
    \centering
    \fontsize{7.6}{9}
    \begin{tabularx}{0.822\linewidth}{c|c|c|c|g|g}
         & & $\gamma = 1$ & $\gamma = 0.995$ & $\gamma = 0.99$ & $\gamma = 0.985$ \\
        \hline\hline
        \multirow{2}{*}{\thead{Baseline}} & \textbf{Avg.} & $\mathbf{1474 \pm 608}$ & $\mathbf{3120 \pm 106}$ & $3190 \pm 246$ & $2656 \pm 369$ \\
        & \textbf{Max} & $2682$ & $3215$ & $3536$ & $3015$ \\
        \hline
        \multirow{2}{*}{\thead{Reward Tweaking}} & \textbf{Avg.} & $\mathbf{1331 \pm 31}$ & $\mathbf{3161 \pm 73}$ & $\mathbf{3405 \pm 141}$ & $\mathbf{3097 \pm 454}$ \\
        & \textbf{Max} & $1771$ & $3228$ & $3579$ & $3622$ \\
        \hline
    \end{tabularx}
    ~\\~\\~\\
    \centering
    \begin{tabularx}{0.8\linewidth}{c|c|g|g|c|c}
         & & $\gamma = 0.98$ & $\gamma = 0.97$ & $\gamma = 0.95$ & $\gamma = 0.9$ \\
        \hline\hline
        \multirow{2}{*}{\thead{Baseline}} & \textbf{Avg.} & $1912 \pm 654$ & $678 \pm 6$ & $523 \pm 66$ & $414 \pm 63$ \\
        & \textbf{Max} & $2755$ & $874$ & $612$ & $541$ \\
        \hline
        \multirow{2}{*}{\thead{Reward Tweaking}} & \textbf{Avg.} & $\mathbf{2996 \pm 617}$ & $\mathbf{2266 \pm 730}$ & $\mathbf{970 \pm 253}$ & $\mathbf{801 \pm 510}$ \\
        & \textbf{Max} & $3572$ & $3071$ & $1140$ & $1754$ \\
        \hline
    \end{tabularx}
    \caption{Numerical results for the Hopper experiments. For each experiment, we present the average score between the best results of the various seeds, the standard deviation and the best (max) performance across all seeds. For each $\gamma$, we highlight the models that, with high confidence, based on the Avg. and STD, performed the best.}
    \label{tab: hopper table}
\end{table}

\end{document}